\begin{document}

\title{Learning by Transference: Training Graph Neural Networks on Growing Graphs}
\author{Juan Cervi\~no, Luana Ruiz, and Alejandro Ribeiro
	\thanks{
		J. Cervi\~no, L. Ruiz and A. Ribeiro are with the Department of Electrical and Systems Engineering, University of Pennsylvania, Philadelphia, PA 19104, USA (e-mail: \mbox{$\{$jcervino,rubruiz,aribeiro$\}$}@seas.upenn.edu).
	}
	}

%\markboth{Journal of \LaTeX\ Class Files,~Vol.~18, No.~9, September~2020}%
%{How to Use the IEEEtran \LaTeX \ Templates}

\maketitle

\begin{abstract}
Graph neural networks (GNNs) use graph convolutions to exploit network invariances and learn meaningful feature representations from network data. However, on large-scale graphs convolutions incur in high computational cost, leading to scalability limitations. Leveraging the graphon---the limit object of a graph---in this paper we consider the problem of learning a graphon neural network (WNN)---the limit object of a GNN---by training GNNs on graphs sampled from the graphon. Under smoothness conditions, we show that: (i) the expected distance between the learning steps on the GNN and on the WNN decreases asymptotically with the size of the graph, and (ii) when training on a sequence of growing graphs, gradient descent follows the learning direction of the WNN. Inspired by these results, we propose a novel algorithm to learn GNNs on large-scale graphs that, starting from a moderate number of nodes, successively increases the size of the graph during training. This algorithm is further benchmarked on a decentralized control problem, where it retains comparable performance to its large-scale counterpart at a reduced computational cost.
\end{abstract}

\begin{IEEEkeywords}
graph neural networks, graphons, machine learning.
\end{IEEEkeywords}

	\section{Introduction}
	\label{sec:Intro}

	Graph neural networks (GNNs) are deep convolutional architectures formed by a succession of layers where each layer composes a graph convolution and a pointwise nonlinearity \cite{9046288,ZHOU202057}. Tailored to network data, GNNs have been used in a variety of applications such as recommendation systems \cite{fan2019graph,tan2020learning,ying2018graph,schlichtkrull2018modeling, ruiz2019invariance} and Markov chains \cite{qu2019gmnn,ruiz2019gated,li2015gated}, and fields such as biology \cite{NIPS2017_f5077839,NIPS2015_f9be311e,pmlr-v70-gilmer17a,chen2020can} and robotics \cite{qi2018learning,gama2021distributed,li2019graph}.
	%Compared to standard Convolutional Neural Networks (CNNs), which impose a fixed structure for the data, GNNs utilize the connectivity graph structure of the data. 
	Their success in these fields and applications provides ample empirical evidence of the ability of GNNs to generalize to unseen data. More recently, their successful performance has also been justified by theoretical works showing that GNNs are invariant to relabelings \cite{chen2019equivalence,keriven2019universal}, stable to graph perturbations \cite{gama2020stability}, transferable across graphs \cite{ruiz2020graphonTransferability}, and expressive \cite{kanatsoulis2022graph}.

	One of the most important features of a GNN is that, since the linear operation is a graph convolution, its number of parameters does not depend on the number of nodes of the graph. In theory, this means that GNNs can be trained on graphs of any size. But if the graph has large number of nodes, in practice training the GNN is costly because computing graph convolutions involves large matrix operations. While this issue could be mitigated by transferability~\cite{ruiz2021transferability}, i.e., by training the GNN on a smaller graph to execute it on the large graph, this approach does not give any guarantees on the distance between the optimal solutions on the small and on the large graph. In other words, when executing the GNN on the target graph we do not know if its error will be dominated by the transferability error or by the generalization error stemming from the training procedure. 
	
%	{One of the most important features of a GNN is that, because the linear operation is a graph convolution, its number of parameters does not depend on the number of nodes of the graph. In theory, this means that GNNs can be trained on graphs of any size. In practice, however, if the graph has large number of nodes training the GNN is costly because computing graph convolutions involves large matrix operations. \textbf{While this issue could be overlooked}, as we are able to train the GNN on a smaller graph to execute on the large graph, we do not have any guarantees on the distance between the optimal solutions on the small and on the large graph. In other words, when executing the GNN on the target graph we do not know if its error will be dominated by the transferability error or by the generalization error from training. }
	
%	\red{Juan is worried that we mention transfer learning but we never compare our method with any other method of the existing literature.}
	
	%In this paper, we address this problem by incorporating transfer learning into the training process of the GNN.
	
	In this paper, we propose a learning technique for training a GNN on a large graph by progressively increasing the size of the network, and we provide the theoretical guarantees under which the GNN converges to its limit architecture. We consider the limit problem of learning an ``optimal'' neural network for a graphon, which is both a graph limit and a random graph model \cite{lovasz2012large,borgs2017graphons,borgs2010moments}.  We postulate that, because sequences of graphs sampled from the graphon converge to it, the so-called graphon neural network \cite{ruiz2020graphonTransferability} can be learned by sampling graphs of growing size and training a GNN on these graphs (Algorithm \ref{alg:WNNL}). We prove that this is true in two steps. In Theorem \ref{thm:grad_Approximation}, we bound the expected distance between the gradient descent steps on the GNN and on the graphon neural network by a term that decreases asymptotically with the size of the graph. A consequence of this bias bound is that it allows us to quantify the trade-off between a more accurate gradient and one that could be obtained with less computational power. We then use this theorem to prove our main result in Theorem \ref{thm:WNN_learning}, which is stated in simplified form below.
	
	%In order to study very large graphs, in this paper we consider the limit object of a series of converging graphs, a graphon. We sample the edges of the graph according to a Bernoulli distribution whose mean is given by the value of the graphon $\bbW$ function. This general framework allows us to characterize a wide variety of stochastic graphs as belonging to the same family of graphs. 
	
	%Leveraging transferability properties of GNNs, this work introduces a learning procedure that progressively increases the size of the graph while training. We provide theoretical guarantees that the learning iterates computed is the graph $\bbG_n$ follow the direction of the gradient on the very large.  Considering a loss function $\ell$ that we try to minimize, we can do local evaluations of the learning direction by computing the gradient on available data. 
	%The main result of this work is to assess the expected difference between the empirical gradient on the graph $\bbG_n$ with $n$ nodes, and limit object, $\bbW$.

 	\textbf{Theorem} (Graphon neural network learning, informal) Let $\bbW$ be a graphon and let $\{\bbG_n\}$ be a sequence of growing graphs sampled from $\bbW$. Consider the graphon neural network $\bbPhi(\bbW)$ and assume that it is learned by training the GNN $\bbPhi(\bbG_n)$ with loss function $\ell(\bby_n,\bbPhi(\bbG_n))$ on the sequence $\{\bbG_n\}$. Over a finite number of training steps, we obtain
 	$$
 	\|\nabla\ell(Y,\bbPhi(\bbW))\| \leq \epsilon \mbox{ with probability } 1.
 	$$
    
     The most important implication of this result is that the learning iterates computed in the sequence of growing graphs follow the direction of the graphon gradient up to a small ball, which provides theoretical support to the proposed training methodology. To further validate our algorithm, we apply it in a numerical experiment where we consider the problem of flocking. Specifically, we train GNNs to learn the actions agents need to take to flock, i.e., to fly with the same velocities while avoiding collisions\cite{tolstaya2020learning}. We compare the results obtained when progressively increasing the number of agents during training and when training is done directly on the large target graph.
     
     \subsection{Related Work}
     
     GNNs are neural information processing architectures for network data that follow from seminal works in deep learning applied to graph theory \cite{bruna2013spectral,NIPS2016_04df4d43,Gori2005ANM,Lu2003LinkbasedC}.  They have been successfully used in a wide variety of statistical learning problems \cite{kipf2016semi,SCARSELLI2018248}, where their good performance is generally attributed to the fact that they exploit invariances present in network data \cite{maron2019invariant,gama2018convolutional,chami2021machine}. More recently, a number of works show that GNNs can be transferred across graphs of different sizes \cite{ruiz2020graphonTransferability,levie2019transferability,keriven2020convergence}. Specifically, \cite{ruiz2020graphonTransferability} leverages graphons to define families of graphs within which GNNs can be transferred with an error that decreases asymptotically with the size of the graph. The papers by \cite{levie2019transferability} and \cite{keriven2020convergence} offer similar results by considering the graph limit to be a generic topological space and a random graph model respectively. In this paper, we use an extension of the transferability bound derived in \cite{ruiz2020graphonTransferability,ruiz2021transferability} to propose a novel learning algorithm for GNNs.

\section{Preliminary Definitions}	
\label{sec:Definitions}

%Graph neural networks are information processing architectures that exploit data invariances from an underlying graph structure. In order to construct a GNNs we need to intercalate successive layers of graph convolutions and point-wise non-linearities.

%Graph neural networks exploit graph symmetries to extract meaningful information from network data \cite{8911416,gama2020stability}. 
A graph is represented as a triplet $\bbG_n=(\ccalV,\ccalE,W)$, where $\ccalV$, $|\ccalV|=n$, is the set of nodes, $\ccalE \subseteq \ccalV \times \ccalV$ is the set of edges and $W:\ccalE\to\reals$ is a map assigning weights to each edge. Alternatively, we can also represent the graph $\bbG_n$ by its graph shift operator (GSO) $\bbS_n\in \reals^{n \times n}$, a square matrix that respects the sparsity pattern of $\bbG_n$. Examples of GSOs include the adjacency matrix $\bbA$, the graph Laplacian $\bbL = \diag(\bbA \boldsymbol{1})-\bbA$ and their normalized counterparts \cite{gama2018convolutional}. In this paper we consider undirected graphs $\bbG_n$, so that $\bbS_n$ is always symmetric, and fix $\bbS_n=\bbA/n$. 

\subsection{Graph Convolutions}

Graph data is represented in the form of graph signals. A graph signal $\bbx_n=[x_1,\dots,x_n]^T\in \reals^n$ is a vector whose $i$-th component corresponds to the information present at the $i$-th node of graph $\bbG_n$. A basic data aggregation operation can be defined by applying the GSO $\bbS_n$ to graph signals $\bbx_n$ as
\begin{align}\label{eqn:graph_shift}
\bbz_n=\bbS_n\bbx_n \text{.}
\end{align}
%The resulting signal $x_i$ is such that the data at node $i$ is a weighted average of
{The value of signal $\bbx_n$ at node $i$, $x_i$, is a weighted} average of the information in the $1$-hop neighborhood of $i$, $z_i = \sum_{j \in \ccalN_i} [\bbS_n]_{ij} x_j$, where $\ccalN_i = \{j \ | \ [\bbS_n]_{ij} \neq 0\}$. Information coming from further neighborhoods (e.g., the $k$-hop neighborhood) can be aggregated by successive applications of the GSO (e.g., $\bbS_n^k\bbx_n$).
%Thus, by successively applying the GSO, the information of the nodes spreads across the entire network while respecting the structure of the graph. 

Using this notion of shift\cite{sandryhaila2013discrete}, we can define \textit{graph convolutions}. Explicitly, the graph convolutional filter with coefficients $\bbh=[h_0,\dots,h_{K-1}]$ is defined as a polynomial on the GSO $\bbS_n$,
\begin{equation} \label{eqn:graph_convolution}
\bby_n = \bbh *_{\bbS_n} \bbx_n = \sum_{k=0}^{K-1} h_k \bbS_n^k \bbx_n
\end{equation}
where $\bbx_n, \bby_n \in \reals^N$ are graph signals and $*_{\bbS_n}$ denotes the convolution operation with GSO $\bbS_n$.

Since the adjacency matrix of an undirected graph is always symmetric, the GSO admits an eigendecomposition $\bbS_n = \bbV \bbLam \bbV^H$. The columns of $\bbV$ are the GSO eigenvectors and the diagonal elements of $\bbLam$ are its eigenvalues, which take values between $-1$ and $1$ and are ordered as $-1 \leq \lambda_{-1} \leq \lambda_{-2} \leq \ldots \leq 0 \leq \ldots \leq \lambda_2 \leq \lambda_1 \leq 1$. 
Since the eigenvectors of $\bbS_n$ form an orthonormal basis of $\reals^n$, we can project \eqref{eqn:graph_convolution} onto this basis to obtain the spectral representation of the graph convolution,
\bal \label{eqn:spectral_graph}
h(\lambda) = \sum_{k=0}^{K-1} h_k \lambda^k \text{.}
\eal
%Defining the change of basis operation $\hby = \bbV^H \bby$ and substituting $\bbS = \bbV \bbLam \bbV^H$ in \eqref{eqn:graph_convolution}, we get $\hby = \sum_{k=0}^{K-1} h_k \bbLam^k \hbx = h(\bbLam) \hbx$. 
Importantly, note that \eqref{eqn:spectral_graph} only depends on the $h_k$ and on the eigenvalues of the GSO. {Hence, by} the Cayley-Hamilton theorem, convolutional filters may be used to represent any graph filter with spectral representation $h(\lambda) = f(\lambda)$ where $f$ is analytic \cite{strang1976}.

\subsection{Graph Neural Networks}

Graph neural networks are layered architectures where each layer consists of a graph convolution followed by a pointwise nonlinearity $\rho$.
%, and where each layer's output is the input to the following layer.  
Layer $l$ can output multiple features $\bbx^f_{nl}$, $1 \leq f \leq F_{l}$, which we stack in a matrix $\bbX_l=[\bbx_{nl}^1,\dots,\bbx^{F_l}_{nl}]\in \reals^{n\times F_l}$. Each column of the feature matrix is the value of the graph signal corresponding to feature $f$. To map the $F_{l-1}$ features coming from layer $l-1$ into $F_l$ features, $F_{l-1} \times F_l$ convolutions need to be computed, one per input-output feature pair. Stacking their weights in $K$ matrices $\bbH_{lk}\in \reals^{F_{l-1}\times F_l}$, we may write the $l$-th layer of the GNN as
\begin{equation} \label{eqn:gcn_layer}
\bbX_{l} = \rho \left( \sum_{k=0}^{K-1} \bbS_n^k \bbX_{l-1} \bbH_{lk} \right). 
\end{equation}
The GNN input (i.e., the input of layer $l=1$) is given by $\bbX_0 = \bbX \in\reals^{n \times F_0}$ and, in an $L$-layer GNN, the operation in \eqref{eqn:gcn_layer} is cascaded $L$ times to obtain the GNN output $\bbY = \bbX_L$. In this paper we assume $F_0=F_L =1$ so that $\bbY= \bby_n$ and $\bbX = \bbx_n$. A more concise representation of this GNN can be obtained by grouping all learnable parameters $\bbH_{lk}$ in a tensor $\ccalH=\{\bbH_{lk}\}_{l,k}$ and defining the map $\bby_n=\bbPhi(\bbx_n;\ccalH,\bbS_n)$. %Since each layer $l$ has a correspondent number of features $F_l$, 
Due to the polynomial nature of the graph convolution, the dimensions of the learnable parameter tensor $\ccalH$ are independent from the size of the graph ($K$ is typically much smaller than $n$). Ergo, a GNN trained on a graph $\bbG_n$ can be deployed on a network $\bbG_m$ with $m \neq n$. 

\subsection{Graphons}

%To formalize the sense in which increasing graphs behave, we need to introduce a limit object, and a sense of convergence. 

\begin{figure*}[t]
\centering
\begin{subfigure}{0.22\textwidth}
\centering
\includegraphics[width=3.7cm]{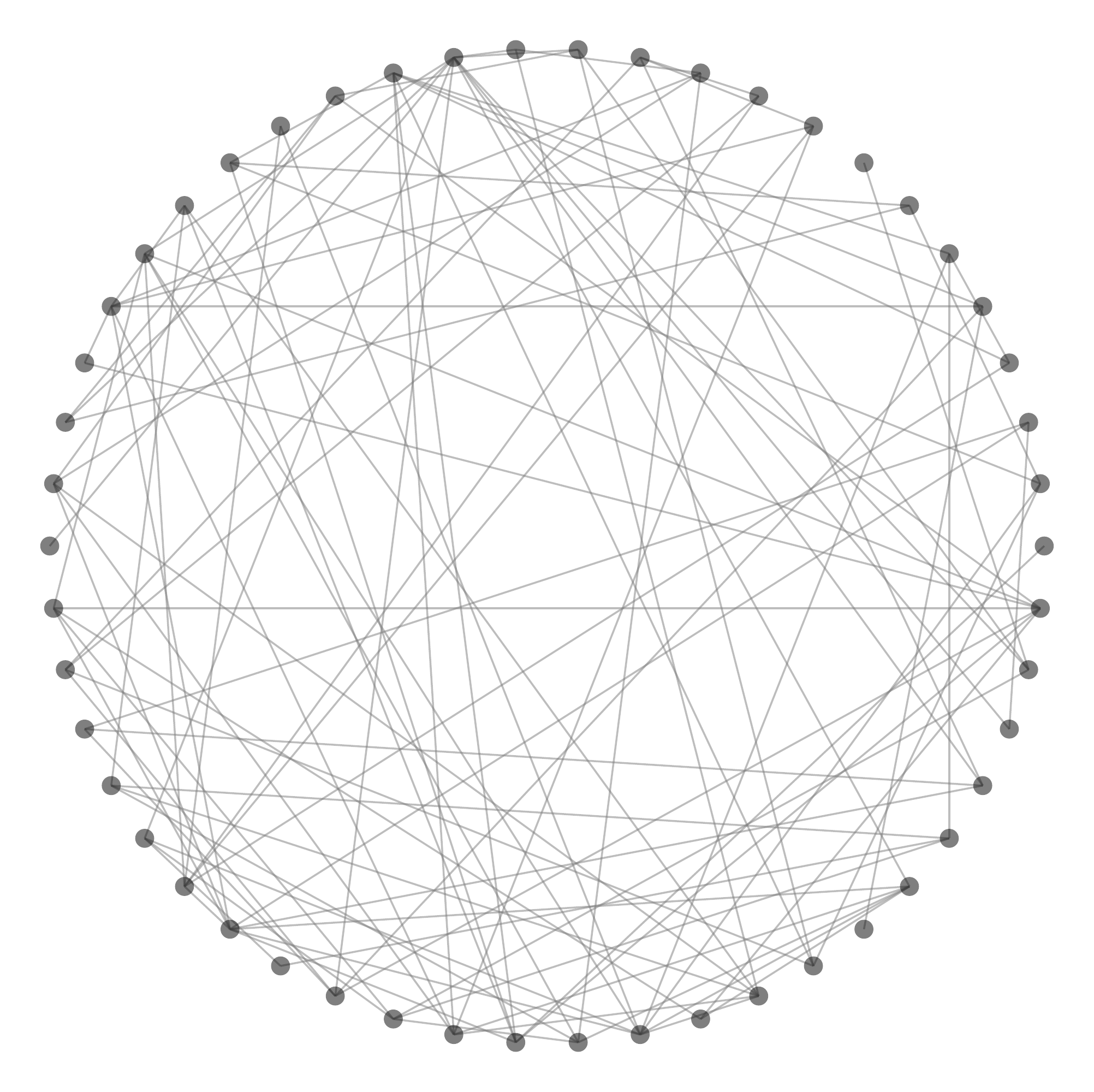}
\caption{50 nodes}
\label{fig:50nodes}
\end{subfigure}
\hspace{0.3cm}
\begin{subfigure}{0.22\textwidth}
\centering
\includegraphics[width=3.7cm]{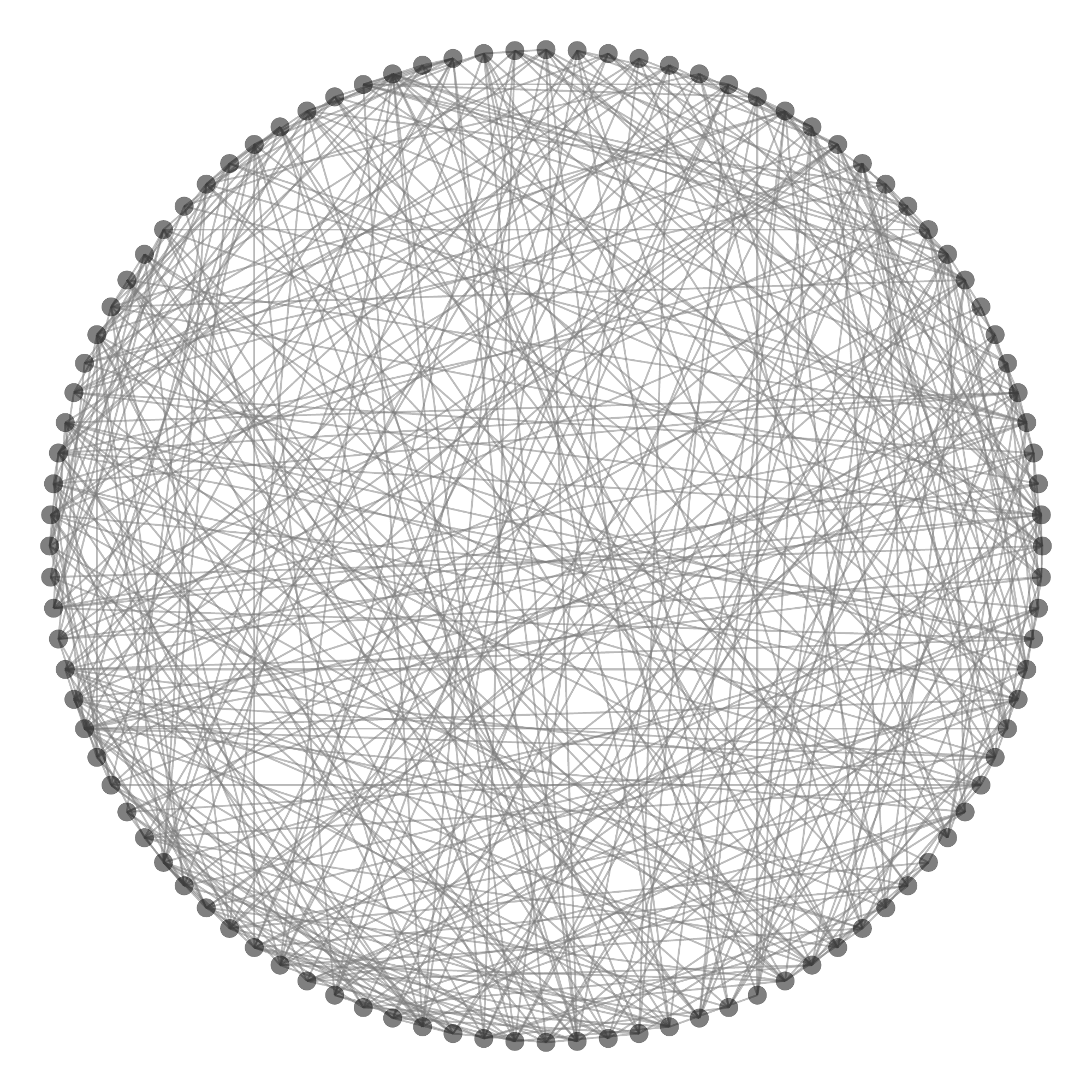}
\caption{100 nodes}
\label{fig:100nodes}
\end{subfigure}
\hspace{0.3cm}
\begin{subfigure}{0.22\textwidth}
\centering
\includegraphics[width=3.7cm]{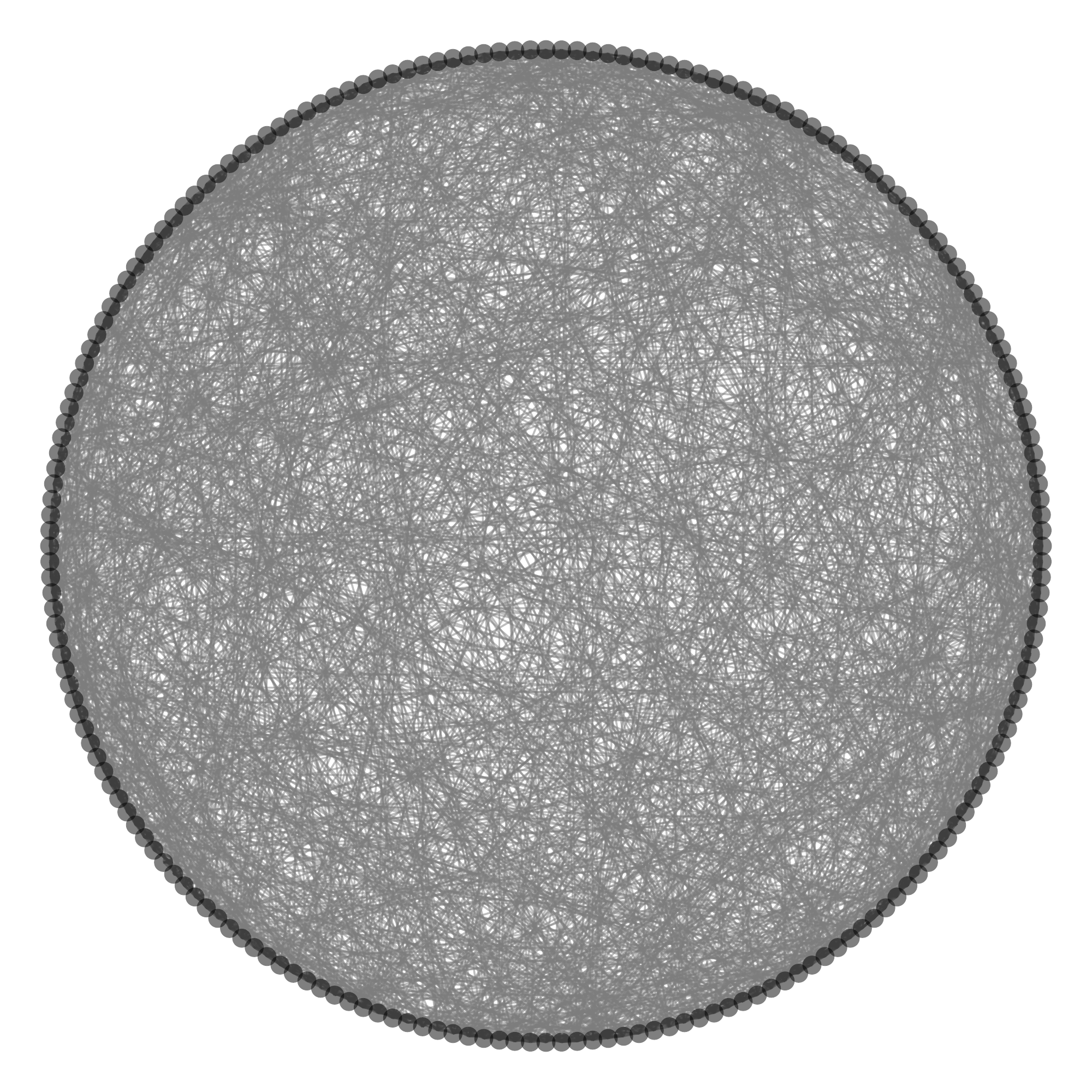}
\caption{200 nodes}
\label{fig:200nodes}
\end{subfigure}
\hspace{0.3cm}
\begin{subfigure}{0.22\textwidth}
\centering
\includegraphics[width=3.7cm]{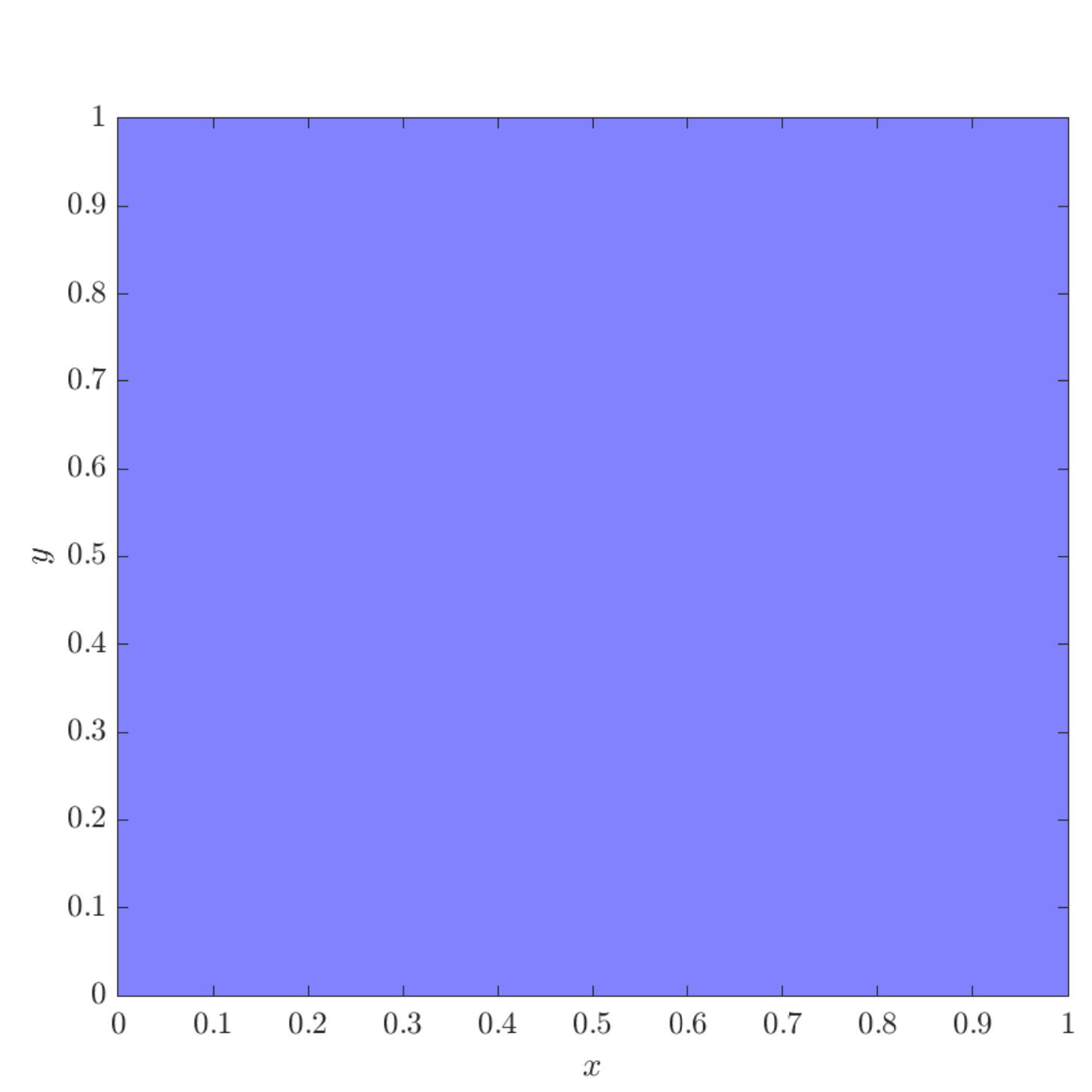}
\caption{Graphon}
\label{fig:er_graphon}
\end{subfigure}

\caption{Sequence of stochastic graphs with 50 (a), 100 (b), and 200 nodes (c) converging to the constant graphon (d).}
\label{fig:graphon_example}
\end{figure*}

A graphon is a bounded, symmetric, and measurable function $\bbW:[0,1]^2\to [0,1]$ which has two theoretical interpretations; it is both a graph limit and a generative model for graphs. In the first interpretation, sequences of dense graphs converge to a graphon in the sense that the densities of adjacency-preserving graph motifs converge to the densities of these same motifs on the graphon \cite{lovasz2012large}. A graphical example of the convergence of dense graphs to a graphon is depicted in figure \ref{fig:graphon_example}. In the second, graphs can be generated from a graphon by sampling points $u_i, u_j$ from the unit interval and either assigning weight $\bbW(u_i,u_j)$ to edges $(i,j)$, or sampling edges $(i,j)$ with probability $\bbW(u_i,u_j)$. 
In this paper, we focus on graphs $\bbG_n$ where the points $u_i$ are defined as $u_i = (i-1)/n$ for $1 \leq i \leq n$ and where the adjacency matrix $\bbS_n$ is sampled from $\bbW$ as
\begin{align} \label{eqn:BernoulliSampledGraph}
[\bbS_n]_{ij} &\sim \text{Bernoulli}([\mbS_n]_{ij}),\text{with }{[\mbS_n]_{ij}} = \bbW(u_i,u_j)
\end{align}
We refer to the graph $\bbS_n$ as a \textit{stochastic graph}, and to the graph $\mbS_n$ from which its edges are sampled as the $n$-node \textit{template graph} associated with the graphon $\bbW$. Sequences of stochastic graphs generated in this way can be shown to converge to $\bbW$ with probability one \cite{lovasz2012large}[Chapter 11].

%As the graph increases, so does the granularity of the nodes in the graphon, thus increasing the resemblance between the two. Graphs and graphon admit the dual interpretation, every graphon is the limit object of a convergent graph sequence, and every convergent graph sequence converges to a graphon \cite[Chapter 11]{lovasz2012large}. 

%To formalize the convergence of the graph sequence we introduce graph ``motifs'' $\bbF$ a generic unweighted and undirected graphs $\bbF = (\ccalV^{'},\ccalE^{'})$ with $n^{'}$ nodes. If  motif $\bbF$ has less nodes than graph $\bbG_N$, homomorphisms of $\bbF$ into $\bbG_n$ quantify how many times we can fit the motif into the graph. Formally, homomorphisms are maps that respect the adjacency of the two graphs, i.e. if $(i,j)\in \ccalE^{'}$ then $(i,j)\in \ccalE$.  The total number of maps from $\ccalV^{'}$ into $\ccalV$ is $|\ccalV|^{|\ccalV^{'}|}=n^{n^{'}}$, thought a portion of them are homomorphisms. We can thus quantify the density of homomorphisms $t(\bbG_n, \bbF)$ as the quotient between the actual and the total number of homomorphisms between graph $\bbG_n$, and motif $\bbF$. Homomorphisms of graph motifs $\bbF$ into graphons $\bbW$ are defined in the same way. Putting the pieces together, the convergence of a graph sequence $\{\bbG_n\}$ to a graphon $\bbW$ occurs in homomorphism densities, as follows, 

%\begin{equation} \label{eqn_graphon_convergence}
%\lim_{N\to\infty} t(\bbF,\bbG_n) = t(\bbF,\bbW).
%\end{equation}

In practice, the two theoretical interpretations of a graphon allow thinking of it as an identifying object for a \textit{family} of graphs of different sizes that are structurally similar. Hence, given a network we can use its family's identifying graphon as a continuous proxy for the graph. This is beneficial because it is typically easier to operate in continuous domains than in discrete domains, even more so if the network is large. In the following, we will leverage these ideas to consider graphon data and graphon neural networks as proxies for graph data and GNNs supported on graphs of arbitrary size.

%\begin{align}\begin{split} \label{eqn:det_gcn}
%&[\bar \bbS_n]_{ij} = \bbW(u_i,u_j), \quad \mbox{and} \\
%&[\bar \bbx_n]_i = X(u_i), \quad \mbox{and} \\
%&[\bar \bby_n]_i = Y(u_i).
%\end{split}\end{align}

%	The previous expression \eqref{eqn:det_gcn} shows how to obtain the deterministic graph data from a graphon data. This is particularly important as we will assume our graph data to be generated from a graphon.''} \red{Rewrite the part highlighted in red. Define weighted graphs briefly as an example of graphs that can be generated from graphons, only using in-line equations, as they are not the main type of graph considered in this paper. We will only use actual equations to define stochastic graphs sampled from the graphon, which are the ones we are interested in.} 

\subsection{Graphon Convolutions}

{A graphon signal is defined as a function $X \in L^2([0,1])$. Analogously to graph signals, graphon signals can be diffused by application of a linear} integral operator given by
\begin{equation}
T_{\bbW}X(v) = \int_0^1 \bbW(u,v)X(u)du.
\end{equation}
which we call the graphon shift operator (WSO). Since $\bbW$ is bounded and symmetric, $T_{\bbW}$ Hilbert-Schmidt and self-adjoint \cite{lax02-functional}. As such, $\bbW$ can be written as $\bbW(u,v) = \sum_{i \in \mbZ\setminus\{0\}} \lambda_i \varphi_i(u)\varphi_i(v)$ where $\lambda_i$ are the graphon eigenvalues and $\varphi_i$ the graphon eigenfunctions. The eigenvalues have magnitude at most one and are ordered as $-1 \leq \lambda_{-1} \leq \lambda_{-2} \leq \ldots\leq 0 \leq \ldots \lambda_2 \leq \lambda_1 \leq 1$. The eigenfunctions form an orthonormal basis of $L^2([0,1])$. 

The notion of a graphon shift allows defining graphon convolutions and specifically, we define the graphon convolutional filter with coefficients $[h_0,\dots,h_{K-1}]$ as
\begin{align}\begin{split} \label{eq:graphon_convolution}
&Y =T_\bbh X = \bbh *_{\bbW} X = \sum_{k=0}^{K-1} h_k (T_{\bbW}^{(k)} X)(v) \quad \mbox{with} \\
&(T_{\bbW}^{(k)}X)(v) = \int_0^1 \bbW(u,v)(T_\bbW^{(k-1)} X)(u)du
\end{split}\end{align}
where $X, Y \in \L^2([0,1])$ are graphon signals, $*_{\bbW}$ denotes the convolution operation with graphon $\bbW$ and $T_{\bbW}^{(0)}=\bbI$ is the identity \cite{ruiz2020graphonFourierTransform}.
Projecting the filter \eqref{eqn:graph_convolution} onto the eigenbasis $\{\varphi_i\}_{i \in \mbZ\setminus\{0\}}$, we see that the graphon convolution admits a spectral representation given by
\bal
h(\lambda) = \sum_{k=0}^{K-1} h_k \lambda^k \text{.}
\eal
Like its graph counterpart, the spectral representation of the graphon convolution only depends on the coefficients $\bbh$ and on the eigenvalues of the graphon.  

\subsection{Graphon Neural Networks}

Graphon neural networks (WNNs) are the extensions of GNNs to graphon data. In the WNN, each layer consists of a bank of graphon convolutions \eqref{eq:graphon_convolution} followed by a nonlinearity $\rho$. Assuming that layer $l$ maps $F_{l-1}$ features into $F_l$ features, the parameters of the $F_{l-1} \times F_l$ convolutions \eqref{eq:graphon_convolution} can be stacked into $K$ matrices $\{\bbH_{lk}\}\in \reals^{F_{l-1}\times F_l}$. Then, we can write the $f$th feature of the $l$th layer as 
%\begin{equation}
%X^f_{\ell} = \rho\left(\sum_{g=1}^{F_{\ell-1}} \bbh_{\ell}^{fg} *_\bbW X^g_{\ell-1} \right)
%\end{equation}
\begin{equation}\label{eqn:wcn_layer}
X_{l}^{f} = \rho\left(\sum_{g=1}^{F_{l-1}} \sum_{k=1}^{K-1} (T_{\bbW}^{(k)} X^g_{l-1})[\bbH_{lk}]_{gf} \right)
\end{equation}
for $1 \leq f \leq F_{l}$. 

For an $L$-layer WNN, $X_0^g$ is given by the input data $X^g$ for $1 \leq g \leq F_0$, \eqref{eqn:wcn_layer} is repeated for $1 \leq \ell \leq L$, and the WNN output is given by $Y^f = X_L^f$. In this paper, we fix $F_L=F_0=1$ so that $X_L = Y$ and $X_0 = X$. Like the GNN, the WNN can be represented more succinctly as a map $Y = \bbPhi(X;\ccalH,\bbW)$. The tensor $\ccalH = \{\bbH_{lk}\}_{l,k}$ groups the filter coefficients at all layers and for all features of the WNN. %Note that the parameters in $\ccalH$ are agnostic to the graphon, as they the number of layer $L$, and features per layer $F_l$ are agnostic of the graphon $\bbW$.

%\red{You don't need to make the paragraphs below an independent subsection. You also have to change the order in which you introduce things. First define GNNs sampled from WNNs. GNNs sampled from WNNs are those in which the graph is sampled from the graphon and the graph signal is sampled from the graphon signal. You need to define this first because this is the first concept that you use in Section 4 (see equation 12). In particular, it will make your life easier if in Theorem 1 you can simply say ``consider this WNN and the GNN \textit{sampled from it} on the $n$-node graph $\bbS_n$''. Then, define WNNs induced by GNNs. This is a necessary but much less important definition. You will only need this definition at the end of your theorem statements, after the equation, to clarify that the $\bbPhi(X_n; \ccalH, \bbW_n)$ in the theorem equation is the \textit{WNN induced by the GNN} you sampled on the graph $\bbS_n$.}

%\subsubsection{Sampling GNNs from WNNs}	

From the representation of the GNN and the WNN as maps $\bbPhi(\bbx_n;\ccalH,\bbS_n)$ and $\bbPhi(X;\ccalH,\bbW)$, we see that these architectures can share the same filter coefficients $\ccalH$. this observation is important because, since graphs can be sampled from graphons as in \eqref{eqn:BernoulliSampledGraph}, it implies that we can similarly use the WNN $\bbPhi(X;\ccalH,\bbW)$ to \textit{sample} GNNs
\begin{align} \label{eqn:gcn_obtained}
\bby_n = \bbPhi(\bbx_n;\ccalH,\bbS_n) \mbox{ where }&[\bbS_n]_{ij} \sim \mbox{Bernoulli}(\bbW(u_i,u_j))\nonumber \\
&[\bbx_n]_i = X(u_i) \text{.}
\end{align}
In other words, the WNN can be interpreted as a generative model for GNNs $ \bbPhi(\bbx_n;\ccalH,\bbS_n)$. 

%In particular, when the spectral representation of the graph convolutions $h(\lambda)$ [cf. \eqref{eqn:graph_convolution_spectral}] is such that
%\begin{equation} \label{eqn:constant_filter}
%h(\lambda) = h(c) \mbox{ for } |\lambda| \leq c
%\end{equation}
%for a fixed constant $c \in [0,1]$, GNNs sampled from the WNN as in \eqref{eqn:gcn_obtained} have been shown to approximate the WNN $\bbPhi(X;\ccalH,\bbW)$ arbitrarily well as $n \to \infty$ \red{[cite Luana's NeurIPS and ICASSP papers]}. In Section \ref{sec:WLearning}, we will show that a WNN with convolutional filters satisfying \eqref{eqn:constant_filter} can be learned by training GNNs with the same type of filter on a sequence of increasing graphs.
Conversely, a WNN can be induced by a GNN. Given a GNN $\bby_n = \bbPhi(\bbx_n;\ccalH,\bbS_n)$, the WNN induced by this GNN is defined as 
\begin{align} \label{eqn:wnn_induced}
\begin{split}
Y_n &= \bbPhi(X_n;\ccalH,\bbW_n) \mbox{ where }\\
\bbW_n(u,v) &= \sum_{i=1}^n\sum_{j=1}^n [\bbS_n]_{ij} \mbI(u \in I_i) \mbI(v \in I_j) \\
X_n(u) &= \sum_{i=1}^n [\bbx_n]_i \mbI(u \in I_i)
\end{split}
\end{align}
where $\mbI$ denotes the indicator function and the intervals $I_i$ are defined as $I_i=[(i-1)/n,i/n)$ for $1 \leq i \leq n-1$ and $I_n = [(n-1)/n,1]$. The graphon $\bbW_n$ and the signals $X_n$ and $Y_n$ are respectively the \textit{graphon induced by the graph} $\bbG_n$ and the \textit{graphon signals induced by the graph signals} $\bbx_n$ and $\bby_n$.

	\section{Graphon Empirical Learning}
	\label{sec:WLearning}
	
	%We live in the era of learning and connectivity, which translates in more gadgets added to the connected network. Previous solutions that rely on graph structures may become obsolete if they are unable to address scalability problems. 
	
	{In this paper, our goal is to solve statistical learning problems where the data is supported on very large graphs. We restrict the optimization domain to the function class of graphon neural networks (cf. \eqref{eqn:wcn_layer}) both to avoid trivial solutions and to be able to generalize to unseen data while leveraging invariances of the graph. Explicitly, we aim to solve an optimization problem of the form}
	
% 	On graphons, the statistical loss minimization (or statistical learning) problem is given by add text}
	\begin{align}\label{eqn:SRM}
	\underset{\ccalH}{\text{minimize}} \quad \mbE_{p(Y,X)}[\ell(Y,\bbPhi(X;\ccalH,\bbW))]
	\end{align}
	where $p(Y,X)$ is an unknown joint data distribution, $\ell$ a {non-negative} loss function and $\bbPhi(X;\ccalH,\bbW)$ a WNN parametrized by the graphon $\bbW$ and by a set of learnable weights $\ccalH$. In this paper, we consider non-negative loss functions $\ell: \reals\times\reals\to\reals^{+}$, such that $\ell(x,y)=0$ if and only if $x=y$. Because the joint probability distribution $p(Y,X)$ is unknown, we cannot derive a closed-form solution of \eqref{eqn:SRM}, but this problem can be approximated by the empirical risk minimization (ERM) problem over graphons.
	
	\subsection{Graphon Empirical Risk Minimization}
	
	 Suppose that we have access to samples of the distribution $\ccalD=\{(X^j,Y^j)\sim p(X,Y), j=1,\dots,|\ccalD|\} $. Provided that these samples are obtained independently and that $|\ccalD|$ is sufficiently large, the statistical loss in \eqref{eqn:SRM} can be approximated by {the empirical sum over $\ccalD$,}
	 	\begin{align}\label{eqn:ERM}
	\underset{\ccalH}{\text{minimize}} \quad \sum_{j=1}^{|\ccalD|}\ell(Y^j,\bbPhi(X^j;\ccalH,\bbW))
	\end{align}
	 giving way to the ERM problem \cite{hastie2009elements,shalev2014understanding,vapnik1999nature,kearns1994introduction}. To solve this problem using local information, we {can use} gradient descent \cite{goodfellow2016deep}. The $k$th gradient descent iteration is given by
	\bal
	\ccalH_{k+1}&=\ccalH_{k}-\eta_k \frac{1}{|\ccalD|}\sum_{j=1}^{|\ccalD|}\nabla_{\ccalH} \ell(Y^j,\bbPhi(X^j;\ccalH_k,\bbW))\label{eqn:WNN_Learning_Step}
	\eal
	where $\eta_k\in (0,1)$ is the step size at iteration $k$. 
	
	In practice, the gradients on the right hand side of \eqref{eqn:WNN_Learning_Step} cannot be computed because the graphon $\bbW$ is an unknown limit object.
	%rendering the graphon learning step \eqref{eqn:WNN_Learning_Step} unimplementable.  
	However, we can exploit the fact that $\bbW$ is a random graph model to approximate the gradients $\nabla_{\ccalH} \ell(Y^j,\bbPhi(X^j;\ccalH,\bbW))$ by sampling stochastic graphs $\bbG_n$ with GSO $\bbS_n$ [cf. \eqref{eqn:BernoulliSampledGraph}] and calculating $\nabla_{\ccalH} \ell(\bby^j_n,\bbPhi(\bbx^j_n;\ccalH,\bbS_n))$, where $\bbx_n^j$ and $\bby_n^j$ are respectively graph signals and graph labels as in \eqref{eqn:gcn_obtained}. In this case, the graphon empirical learning step in \eqref{eqn:WNN_Learning_Step} becomes 
	%However, if the structure of the problem remains unchanged, we can resort to successive evaluations of the graphon, obtaining stochastic graphs $\bbS_n$ respectively .  Hence, the graphon empirical learning step \eqref{eqn:WNN_Learning_Step} can be tailored to its stochastic graph counter part as, 
	\bal
	\ccalH_{k+1}&=\ccalH_{k}-\eta_k  \frac{1}{|\ccalD|}\sum_{j=1}^{|\ccalD|} \nabla_{\ccalH} \ell(\bby^j_{n},\bbPhi(\bbx^j_{n};\ccalH_k,\bbS_n)). \label{eqn:GNN_Learning_Step}
	\eal

	As we show in the next subsection, the gradient descent iterates on the graph \eqref{eqn:GNN_Learning_Step} are good approximations of the gradient descent steps on the graphon \eqref{eqn:WNN_Learning_Step}.
	%The main difference between the graphon empirical learning step \eqref{eqn:WNN_Learning_Step}, and the graph empirical learning step \eqref{eqn:GNN_Learning_Step} is the convolution operator cast in the neural network. 

\subsection{Gradient Approximation}

%In order to learn WNNs using the gradient descent iterates in \eqref{eqn:GNN_Learning_Step}, we need to relate these gradient descent steps with the gradient descent steps on the graphon \eqref{eqn:WNN_Learning_Step}. In the following, we derive an upper bound for the expected error made when using the gradient calculated on the graph to approximate the gradient on the graphon. 
%In what follows, we give a closed-form expression of this bound, and use it as a stepping stone to develop an algorithm that increases $n$, the graph size, at regular intervals during the training process of the GNN. We then prove that our algorithm converges to a neighborhood of the optimal solution for the WNN. 
%analyze the relationship between the two aforementioned gradient. 

In order to derive an upper bound for the expected error made when using the graph gradients \eqref{eqn:GNN_Learning_Step} to approximate the graphon gradients \eqref{eqn:WNN_Learning_Step}, we introduce a Lipschitz smoothness definition, as well as two definitions related to the spectra of the graphon $\bbW$ and the graphon $\bbW_n$ induced by $\bbS_n$ (cf. equation \ref{eqn:wnn_induced}).

%recall the definition of Lipschitz continuity and introduce \textit{normalized} Lipschitz functions.
\begin{definition}[Lipschitz continuous functions] \label{def:norm_lips}
	A function $f(u_1, u_2, \ldots, u_d)$ is $A$-Lipschitz on the variables $u_1, \ldots, u_d$ if it satisfies $|f(v_1,v_2,\ldots,v_d)-f(u_1,u_2,\ldots,u_d)| \leq A\sum_{i=1}^d |v_i-u_i|$.
	If $A=1$, we say that this function is normalized Lipschitz.
\end{definition}
%Consider the GNN $\bbPhi(\bbx_n;\ccalH,\bbS_n)$ instantiated from the WNN $\bbPhi(X;\ccalH,\bbW)$ as in \eqref{eqn:BernoulliSampledGraph} and \eqref{eqn:gcn_obtained}. 
%{The graph and the graphon need to be related, this requires further assumptions.} 

\begin{definition}[$c$-band cardinality of $\bbW$]
\label{def:c_band_cardinality}
{The $c$-band cardinality, denoted $B_{\bbW}^c$, is the number of eigenvalues whose absolute value is larger than $c$, }
\bal
B_{\bbW}^c = \# \{\lambda_i:\|\lambda_i\|\leq c \}\nonumber
\eal
\end{definition}
\begin{definition}[$c$-eigenvalue margin of $\bbW$ - $\bbW_n$]
\label{def:c_eigenvalue_margin}
{The $c$-eigenvalue margin of $\bbW$ and $\bbW_n$ is the minimum distance between eigenvalues $\lambda_i(T_\bbW)$ and $\lambda_j(T_{\bbW_n})$, $i \neq j$, }
\bal
\delta_{\bbW\bbW_n}^c = \min_{i,j\neq i}  \{ \| \lambda_i(T_\bbW) - \lambda_i(T_{\bbW_n})\| : \| \lambda_i(T_{\bbW_n})\|\geq c \}\nonumber
\eal
\end{definition}

We also impose the following Lipschitz continuity assumptions (AS\ref{as1}--AS\ref{as4}), and an assumption on the size of the graph $\bbS_n$ (AS\ref{as5}).

\begin{assumption} \label{as1}
	The graphon $\bbW$ and the graphon signals $X$ and $Y$ are normalized Lipschitz.
\end{assumption} 
\begin{assumption} \label{as2}
	The convolutional filters $h$ are normalized Lipschitz and non-amplifying, i.e., $\|h(\lambda)\|<1$.
\end{assumption} 
\begin{assumption} \label{as3}
	The activation functions and their gradients are normalized Lipschitz, and $\rho(0)=0$.
\end{assumption} 
\begin{assumption} \label{as4}
	The loss function $\ell: \reals\times\reals\to\reals^{+}$ and its gradient are normalized Lipschitz, and $\ell(x,x)=0$.
\end{assumption}
\begin{assumption} \label{as5}
	For a fixed value of $\xi \in (0,1)$, $n$ is such that $n-{\log (2n /\xi)}/{d_\bbW}> {2}/{d_\bbW}$ where $d_\bbW$ denotes the maximum degree of the graphon $\bbW$, i.e., $d_\bbW=\max_v \int^1_0 \bbW(u,v)du$.
\end{assumption} 

Under AS\ref{as1}--AS\ref{as5}, the expected norm of the difference between the graphon and graph gradients in \eqref{eqn:WNN_Learning_Step} and \eqref{eqn:GNN_Learning_Step} is upper bounded by Theorem \ref{thm:grad_Approximation}. The proof is deferred to the Appendix \ref{sec:AppendixA}.

%\red{Talk to Juan about theorem statement. Need to define how to sample GNNs from WNNs in the preliminary definitions (Sec. 3) to simplify the statement. Need to define $\delta_c$ and $B_{c}$. Will we need to introduce weighted graphs to define them? Are the filters constant below a certain $c$? Where is this being specified?}

%\red{I have rewritten the theorem statement. Please see below. We have a bigger problem though, which is that filters that are constant below a certain $c$ can never be written as graph convolutions, yet we assume them to be convolutional. If we remove the requirement that these filters be constant below a certain $c$, the transferability bound has an extra term proportional to $c\|X\|$ which does not disappear as we increase $n$. Would Theorems 1 and 2 hold if the transferability theorem had this extra term?}	

\begin{theorem}\label{thm:grad_Approximation}
	%(Graph to Graphon Learning Step Approximation).
	Consider the ERM problem in \eqref{eqn:ERM} and let $\bbPhi(X;\ccalH,\bbW)$ be an $L$-layer WNN with $F_0=F_L=1$, and $F_{l}=F$ for $1\leq l \leq L-1$.  Let $c \in (0,1]$ and assume that the graphon convolutions in all layers of this WNN have $K$ filter taps [cf. \eqref{eq:graphon_convolution}]. Let $\bbPhi(\bbx_n;\ccalH,\bbS_n)$ be a GNN sampled from $\bbPhi(X;\ccalH,\bbW)$ as in \eqref{eqn:gcn_obtained}. Under assumptions AS\ref{as1}--AS\ref{as5}, it holds
	\bal\label{eq:thm_1}
	\mbE[\|\nabla_{\ccalH}&\ell(Y,\bbPhi(X;\ccalH,\bbW))-\nabla_{\ccalH}\ell(Y_n,\bbPhi(X_n;\ccalH,\bbW_n))\|]\nonumber\\
	&\leq \gamma c+\ccalO\bigg( \sqrt{\frac{\log(n^{3/2})}{n}}\bigg)
	%3L^2F^{2L-2}\bigg( 1+\frac{\pi B^{c}_{\bbW \bbW_n}}{\delta_{\bbW \bbW_n}^c}\bigg)\sqrt{\frac{\log (2n^{3/2})}{n}}+12L^2F^{2L-2}c \label{eq:thm_1}
	\eal
	where $Y_n$ is the graphon signal induced by $[\bby_n]_i = Y(u_i)$ [cf. \eqref{eqn:wnn_induced}], and $\gamma$ is a constant that depends on the number of layers $L$, features $F$, and filter taps $K$ of the GNN [cf. Definition \ref{def:Constant_Of_GNN} in the Appendix \ref{sec:AppendixA}]. %The fixed constants $B^c_{\bbW}$ and $\delta^c_{\bbW \bbW_n}$ are the $c$-band cardinality and the $c$-eigenvalue margin of $\bbW$ and $\bbW_n$ respectively [cf. Definitions \ref{def:c_band_cardinality},\ref{def:c_eigenvalue_margin} in the supplementary material].
\end{theorem}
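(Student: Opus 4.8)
The plan is to reduce the gradient difference to two more tractable quantities via the chain rule and then invoke the transferability machinery already available for WNN forward maps. Writing the loss gradient as $\nabla_{\ccalH}\ell = \nabla_2\ell\,\cdot\,\nabla_{\ccalH}\bbPhi$, where $\nabla_2\ell$ denotes the gradient of $\ell$ in its second (prediction) argument, I would add and subtract the mixed term $\nabla_2\ell(Y_n,\bbPhi(X_n;\ccalH,\bbW_n))\,\nabla_{\ccalH}\bbPhi(X;\ccalH,\bbW)$ and apply the triangle inequality. This separates the total error into a term (i) in which the outer loss-gradient factor differs while the inner network Jacobian is held fixed, and a term (ii) in which the network Jacobian differs while the outer factor is held fixed. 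Controlling each factor's norm (so that products stay bounded) and each factor's graphon-versus-graph difference is then enough; the norm control comes from the non-amplifying and normalized Lipschitz hypotheses AS\ref{as2}--AS\ref{as4}.

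For term (i), the normalized Lipschitz continuity of $\nabla\ell$ (AS\ref{as4}) gives $\|\nabla_2\ell(Y,\bbPhi(X;\ccalH,\bbW)) - \nabla_2\ell(Y_n,\bbPhi(X_n;\ccalH,\bbW_n))\| \le \|Y-Y_n\| + \|\bbPhi(X;\ccalH,\bbW) - \bbPhi(X_n;\ccalH,\bbW_n)\|$. The label term $\|Y-Y_n\|$ is controlled by the Lipschitz continuity of $Y$ (AS\ref{as1}) and the $\mathcal{O}(1/n)$ discretization of $[0,1]$ induced by \eqref{eqn:wnn_induced}, while the output term is exactly the WNN transferability bound of \cite{ruiz2020graphonTransferability,ruiz2021transferability}, already of the form $\gamma' c+\mathcal{O}(\sqrt{\log(n^{3/2})/n})$. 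This is where Definitions \ref{def:c_band_cardinality} and \ref{def:c_eigenvalue_margin} enter: eigenvalues below $c$ contribute only an aggregate $\mathcal{O}(c)$ because the filters are Lipschitz (AS\ref{as2}), whereas eigenvalues above $c$ are matched individually thanks to the nonzero margin $\delta_{\bbW\bbW_n}^c$, so the corresponding eigenspaces remain stable under the perturbation $T_{\bbW}\to T_{\bbW_n}$.

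The main obstacle is term (ii), the transfer of the network Jacobian $\nabla_{\ccalH}\bbPhi$ itself. Each component $\partial\bbPhi/\partial[\bbH_{lk}]_{gf}$ is produced by backpropagation and is a sum of products interleaving shift-operator powers ($T_{\bbW}^{(k)}$ or $T_{\bbW_n}^{(k)}$), derivatives $\rho'$ of the nonlinearity evaluated at the intermediate features, and the features $X_l^f$ themselves. I would bound the graphon-versus-graph difference of such a product by a telescoping argument, replacing one factor at a time: the unchanged factors are bounded in norm (non-amplifying filters and bounded $\rho'$ from AS\ref{as2}--AS\ref{as3} keep each product $\mathcal{O}(1)$ uniformly in $n$), while the swapped factor contributes its own transfer error. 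These per-factor errors are of three kinds --- the forward-pass feature error $\|X_l^f - X_{n,l}^f\|$ (propagated layerwise exactly as in the output transferability proof), the nonlinearity-derivative error $\|\rho'(X_l^f)-\rho'(X_{n,l}^f)\|$ (reduced to the feature error by the Lipschitz continuity of $\rho'$ in AS\ref{as3}), and the filter-transfer error between $T_{\bbW}^{(k)}$ and $T_{\bbW_n}^{(k)}$ (the same spectral estimate used for term (i)). An induction over the $L$ layers then shows each error accumulates at most linearly in depth, and this dependence on $L$, $F$, $K$ is absorbed into the constant $\gamma$ of Definition \ref{def:Constant_Of_GNN}.

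Finally, I would collect the two terms and take expectations. The only randomness is the Bernoulli sampling defining $\bbW_n$; the operator-norm deviation $\|T_{\bbW}-T_{\bbW_n}\|$ concentrates as $\mathcal{O}(\sqrt{\log(n)/n})$, and combined with the $\mathcal{O}(1/n)$ Lipschitz discretization of $\bbW$ into the template $\mbS_n$ this yields the stated $\mathcal{O}(\sqrt{\log(n^{3/2})/n})$ rate. Assumption AS\ref{as5} is precisely the size condition guaranteeing the high-probability event on which this spectral concentration holds. Summing the $\mathcal{O}(c)$ contributions of both terms into $\gamma c$ and the concentration contributions into the big-$\mathcal{O}$ term completes the bound \eqref{eq:thm_1}.
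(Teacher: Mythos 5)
Your decomposition is, up to the last step, the paper's own proof. The add--and--subtract of the mixed term and the split into your terms (i) and (ii) is exactly Lemma \ref{lemma:norm_bound_dif_loss_WNN}; the treatment of term (i) via AS\ref{as4}, the $\ccalO(1/n)$ label discretization (Proposition \ref{prop:x-x'}) and the output transferability bound imported from \cite{ruiz2021transferability} (Lemma \ref{lemma1:WNN-WNNn}) is identical; and your telescoping, layerwise bound on $\|\nabla_{\ccalH}\bbPhi(X;\ccalH,\bbW)-\nabla_{\ccalH}\bbPhi(X_n;\ccalH,\bbW_n)\|$, with the three error sources you list, is the content of Lemma \ref{lemma:norm_bound_dif_grad_WNN}. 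Up to that point the proposal is sound and matches the paper.

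The genuine gap is in your final paragraph. Everything you have assembled holds only \emph{with probability} $1-\xi$ (on the spectral concentration event), whereas the theorem asserts a bound on an \emph{expectation}; one cannot simply ``collect the two terms and take expectations,'' because a high-probability bound says nothing about the integrand on the failure event. The paper closes this by defining the event $A_n$ on which Lemma \ref{lemma:norm_bound_dif_loss_WNN} holds, splitting $\mbE[\cdot]=\mbE[\cdot\,\bbone(A_n)]+\mbE[\cdot\,\bbone(A_n^c)]$, and controlling the complement with the \emph{uniform} bound $\|\nabla_{\ccalH}\ell(Y,\bbPhi(X;\ccalH,\bbW))-\nabla_{\ccalH}\ell(Y_n,\bbPhi(X_n;\ccalH,\bbW_n))\|\leq 2F^{2L}\sqrt{K}$, obtained from Cauchy--Schwartz, AS\ref{as4} and Lemma \ref{lemma:norm_grad_on_wnn_bound} --- this worst-case control is the main reason Lemma \ref{lemma:norm_grad_on_wnn_bound} exists, not merely ``keeping products bounded.'' The two contributions are then balanced by choosing $\xi=1/\sqrt{n}$, and this choice is the true origin of the exponent $3/2$: it turns $\log(2n/\xi)$ into $\log(2n^{3/2})$ while making the bad-event term $\ccalO(1/\sqrt{n})$. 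Your attribution of the rate to ``combining the $\ccalO(\sqrt{\log(n)/n})$ concentration with the $\ccalO(1/n)$ discretization'' is therefore incorrect --- that combination alone would give $\ccalO(\sqrt{\log(n)/n})$ --- and AS\ref{as5} does not repair this, since it only guarantees the concentration rate on the good event and has no bearing on the conversion from probability to expectation.
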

% 	\begin{proof}
% 		See Appendix \ref{sec:AppendixA}.
% 	\end{proof}
% \begin{figure*}[t]
% 	\centering
% 	\input{figures/drawing/gradient.tex}
% 	\caption{In order to satisfy the property that the inner product between the gradient on the GNN $\nabla_{\ccalH}\ell(Y_n,\bbPhi(X_n;\ccalH,\bbW_n))$ and the gradient on the graphon $\nabla_{\ccalH}\ell(Y,\bbPhi(X;\ccalH,\bbW))$ is positive, we rely on the condition provided in Theorem \ref{thm:grad_Approximation} to bound the expected distance between the two gradients. }
% 	\label{fig:grad_ball}
% \end{figure*}
% With respect to assumption \ref{as6}, far from being a limitation, it is an extra layer of information provided for the bound. It allows us to set the minimum number of nodes $n$ from which the Theorem holds true.	
Theorem \ref{thm:grad_Approximation} upper bounds the distance between the learning step on the graph and on the graphon {for a given input-output pair $(X,Y)\in\ccalD$} as a function of the size of the graph. Note that this bound is controlled by two terms.
%: a term that decreases with $n$, and one that is constant.
The first term decreases with $n$ and is an \textit{approximation bound} related to the approximation of $\bbW$ by $\bbS_n$. The second term is a constant \textit{nontransferable bound} controlled by $c$ and related to a threshold that we impose on the convolutional filter $h$. Since graphons $\bbW$ have an infinite spectrum that accumulates around zero, we can only show convergence of all of the spectral components of $Y_n$ in the limit \cite{ruiz2020graphonTransferability}. Given that $n$ is finite, we can only upper bound distances between spectral components associated with eigenvalues larger than some fixed threshold $c \in (0,1]$. The perturbations associated with the other spectral components are controlled by bounding the variation of the filter response below $c$.  %The value of $c$ determines $B_{\bbW \bbW_n}^c$, which counts the number of eigenvalues of $\bbS_n$ larger than $c$; and $\delta_{\bbW \bbW_n}^c$, which is approximately equal to the minimum distance between eigenvalues of $\bbW$ above $c$ [cf. Definitions \ref{def:c_band_cardinality},\ref{def:c_eigenvalue_margin} in the supplementary material]. 

The bound in Theorem \ref{thm:grad_Approximation} is important because it allows us to quantify the error incurred by taking gradient steps not on the graphon, but on the graph data. This is instrumental for learning a meaningful solution of the graphon ERM problem \eqref{eqn:ERM}. Although we cannot calculate gradients of the function that we want to learn [cf. \eqref{eqn:WNN_Learning_Step}], Theorem \ref{thm:grad_Approximation} shows that we can expect to follow the direction of the gradient on the WNN by measuring the ratio between the norm of the gradient of the loss on the GNN and the difference between the two gradients.
%being able to follow the direction of these gradients 
Note however that we are only able to decrease this approximation bound down to the value of the nontransferable bound $\gamma c$. %While we could decrease the nontransferable bound by decreasing $c$, this slows convergence due to the increase of $B_{\bbW \bbW_n}^c$ and the decrease of $\delta_{\bbW \bbW_n}^c$.

%The first set of constants, $\sqrt{K}LF^{3L}$, is related to the total number of hyperparameters $\ccalH$. The term $1+\frac{2\pi n_c^{(q)}}{\delta_c^{(pq)}}\sqrt{\frac{\log\left(2n^{3/2}\right)}{n}}$ is a concentration bound encoding the proximity between a stochastic graph filter and the deterministic one. \red{Do not use frac for in-line equations. Maybe parse these terms in more detail. You never introduced deterministic filters.} This bound is closely related to \cite{chung2011spectra}[Theorem 1]. \red{Not really.} Finally, the bound also depends on the norm and on the energy of the signal on the graphon $X$. \red{Briefly discuss meaning of $B_c$ and $\delta_c$ and their implications.}

\begin{remark}[Feasibility of assumptions] 
AS\ref{as1}--AS\ref{as4} are normalized Lipschitz smoothness conditions which can be relaxed by making the Lipschitz constant greater than one. AS\ref{as3} holds for most activation functions used in practice, e.g., the ReLU and the sigmoid . AS\ref{as4} can be achieved by normalization and holds for most loss functions in a closed set (e.g., the hinge or mean square losses). AS\ref{as5} is necessary to guarantee a $\ccalO(\sqrt{\log{n}/n})$ rate of convergence of $\bbS_n$ to $\bbW$ \cite{chung2011spectra}.
\end{remark}

%%%%%%%%%%%%%%%%%%%%%%%%%%%%%%%%%%%%%%%%%%%%%%%%%%%%%%%%%%%%%%%%%%%%%%%%%%%%%%%%%%%%%%%%%%%%%%%%%%%%
%%%%%%%%%%%%%%%%%%%%%%%%%%%%%%%%%%%%%%%%%%%%%%%%%%%%%%%%%%%%%%%%%%%%%%%%%%%%%%%%%%%%%%%%%%%%%%%%%%%%
%%%%%%%%%%%%%%%%%%%%%%%%%%%%%%%%%%%%%%%%%%%%%%%%%%%%%%%%%%%%%%%%%%%%%%%%%%%%%%%%%%%%%%%%%%%%%%%%%%%%
%%%%%%%%%%%%%%%%%%%%%%%%%%%%%%%%%%%%%%%%%%%%%%%%%%%%%%%%%%%%%%%%%%%%%%%%%%%%%%%%%%%%%%%%%%%%%%%%%%%%
%%%%%% Section Training on Growing Graphs
%%%%%%%%%%%%%%%%%%%%%%%%%%%%%%%%%%%%%%%%%%%%%%%%%%%%%%%%%%%%%%%%%%%%%%%%%%%%%%%%%%%%%%%%%%%%%%%%%%%%
%%%%%%%%%%%%%%%%%%%%%%%%%%%%%%%%%%%%%%%%%%%%%%%%%%%%%%%%%%%%%%%%%%%%%%%%%%%%%%%%%%%%%%%%%%%%%%%%%%%%
%%%%%%%%%%%%%%%%%%%%%%%%%%%%%%%%%%%%%%%%%%%%%%%%%%%%%%%%%%%%%%%%%%%%%%%%%%%%%%%%%%%%%%%%%%%%%%%%%%%%
%%%%%%%%%%%%%%%%%%%%%%%%%%%%%%%%%%%%%%%%%%%%%%%%%%%%%%%%%%%%%%%%%%%%%%%%%%%%%%%%%%%%%%%%%%%%%%%%%%%%

\section{Training on Growing Graphs}

%as a stepping stone towards building the conditions under which we can converge to the graphon neural network. That is, how to assure the convergence of the limit object by following the iterates sampled from it.

%\subsection{Algorithm Construction}
%\label{sec:Algorithm}

An insightful observation that can be made from Theorem \ref{thm:grad_Approximation} is that, since the discretization error depends on the number of nodes of the graph, it is possible to reduce its effect by iteratively increasing the graph size at regular intervals during the training process. In Algorithm \ref{alg:WNNL}, we propose a training procedure that does so at every epoch.
%every epoch. This is what we propose to do in Algorithm \ref{alg:WNNL}. 
% \red{L: The next sentences could use some rewriting/proofreading. Specifically, explain better what is the bias term, and how it is decreased as $k$ increases.} Juan believes thet can be removed allorgether.

% Even if a bias is introduced in the gradient, we will be able to follow the gradient direction of the graphon learning problem \eqref{eqn:WNN_Learning_Step}. 

% At the same time, we need to keep the computational cost of the iterates under control. Note that the norm of the gradient is large at first, but it decreases as we approach the optimal solution. 

% Exploiting this behavior, we may progressively reduce the bias term as iterations increase. 

% The idea here is to keep the discretization of the gradient small so as to closely follow the gradient direction of the graphon learning problem, without however being too conservative---as this would incur in a higher than necessary computational cost. 

\begin{algorithm}[t]
		\caption{Learning by transference}
		\label{alg:WNNL}
		\begin{algorithmic}[1]
			\State   Initialize $\ccalH_0,n_0$ and sample graph $\bbG_{n_0}$ from graphon $\bbW$
			\Repeat \ for epochs $0,1,\ldots$
			\For {$k$ =1,\dots, $|\ccalD|$}
			\State Obtain sample $(Y,X)\sim \ccalD$
			\State Construct graph signal $\bby_n,\bbx_n$ [cf. \eqref{eqn:gcn_obtained}]
			\State Take learning step:\\ \quad\quad\quad\quad$\ccalH_{k+1}=\ccalH_{k}-\eta_k\nabla \ell(\bby_n,\bbPhi(\bbx_n;\ccalH_k,\bbS_n))$
			\EndFor
			\State {Increase number of nodes $n$}
			\State {Sample $\bbS_n$ Bernoulli from graphon $\bbW$ [cf. \eqref{eqn:BernoulliSampledGraph}]}
			\Until  convergence
		\end{algorithmic}
\end{algorithm}

In practice, in the ERM problem the number of nodes that can be added is upper bounded by the maximum graph size in the dataset. As long as we stay under this limit, we can decide which and how many nodes to consider for training arbitrarily. The novelty of Algorithm \ref{alg:WNNL} is that it does not require using the largest available graph to train the GNN. Instead, we can set a minimum graph size $n_0$ and progressively increase it up to the total number of nodes. {The main advantage of Algorithm \ref{alg:WNNL} is thus that it allows reducing the computational cost of training the GNN without compromising its performance. 

{A bound on the difference between the gradients does not, however, imply that we can follow the gradient on the graph blindly. In other words, the fact that gradients are close does not imply that the direction that the iterates follow in the graphon and the graph is the same. In the following subsection, we will thus use Theorem \ref{thm:grad_Approximation} to obtain the conditions under which the gradient steps taken on the graph follow the learning direction we would expect to follow on the graphon, so that Algorithm \ref{alg:WNNL} converges to a neighborhood of the optimal solution on the graphon.

\subsection{Algorithm Convergence}

{To prove that Algorithm \ref{alg:WNNL} converges, we need the following Lipschitz assumption on the neural network and its gradients.}

\begin{assumption} \label{as7}
	The graphon neural network $ \bbPhi(X;\ccalH,\bbW)$ is $\AWNN$-Lipschitz, and its gradient $ \nabla_\ccalH\bbPhi(X;\ccalH,\bbW)$ is $\AgWNN$-Lipschitz, with respect to the parameters $\ccalH$ [cf. Definition \ref{def:norm_lips}].
\end{assumption}

We also need the following lemma that we prove in Appendix \ref{sec:AppendixB}.
\begin{lemma}\label{lemma:martingale}
		%Consider t$\sum \eta_k =\infty$,$\sum \eta_k^2 \leq\infty$. Under Assumptions \ref{as1}-\ref{as7}, if the loss $\ell(Y_n,\Phi(\bbW_n,X_n))$ converges to a finite value, and the.
		Consider the ERM problem in \eqref{eqn:ERM} and let $\bbPhi(X;\ccalH,\bbW)$ be an $L$-layer WNN with $F_0=F_L=1$, and $F_{l}=F$ for $1\leq l \leq L-1$. Let $c \in (0,1]$, $\epsilon>0$, step size $\eta<{\Anl^{-1}}$, with $\Anl=\AnWNN+\AWNN F^{2L} \sqrt{K}$  and assume that the graphon convolutions in all layers of this WNN have $K$ filter taps [cf. \eqref{eq:graphon_convolution}]. Let $\bbPhi(\bbx_n;\ccalH,\bbS_n)$ be a GNN sampled from $\bbPhi(X;\ccalH,\bbW)$ as in \eqref{eqn:gcn_obtained}. Consider the iterates generated by equation \eqref{eqn:GNN_Learning_Step}. Under Assumptions AS\ref{as1}-AS\ref{as7}, if at step $k$ the number of nodes $n$ verifies
		\begin{align}
	      \mbE[\|\nabla_{\ccalH_{k}}&\ell(Y,\bbPhi(X;\ccalH_k,\bbW))-\nabla_{\ccalH}\ell(Y_n,\bbPhi(X_n;\ccalH,\bbW_n))\|]  \nonumber\\
	      & \leq	\|\nabla_{\ccalH} \ell(Y,\bbPhi(X;\ccalH_k,\bbW))\|%^2  /2 F^{2L} \sqrt{K}
		\end{align}
% 		\bal\label{eqn:thm2_condition}
% 		C\|\nabla_{\ccalH}\ell(Y_n,\bbPhi(X_n;\ccalH_k,\bbW_n))  \|^{-1}\leq \sqrt{n}
% 		\eal
        then the iterate generated by graph learning step \eqref{eqn:GNN_Learning_Step} satisfies
        \bal
        \mbE[\ell(Y,\bbPhi(X;\ccalH_{k+1},\bbW))]\leq\ell(Y,\bbPhi(X;\ccalH_{k},\bbW)).
        \eal
    \end{lemma}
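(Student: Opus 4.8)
The plan is to prove a descent inequality for the \emph{true} (graphon) objective $F(\ccalH) := \ell(Y,\bbPhi(X;\ccalH,\bbW))$, regarded as a function of the parameter tensor $\ccalH$, and to show that a step driven by the \emph{graph} gradient still decreases $F$ in expectation. The backbone is the standard smoothness/descent lemma, with the two assumption families AS\ref{as4} and AS\ref{as7} supplying the Lipschitz constant of $\nabla_\ccalH F$, and the lemma's hypothesis together with Theorem \ref{thm:grad_Approximation} controlling the error introduced by replacing the graphon gradient with the graph gradient.

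First I would establish that $F$ has an $\Anl$-Lipschitz gradient with $\Anl = \AnWNN + \AWNN F^{2L}\sqrt{K}$. Writing $\nabla_\ccalH F(\ccalH) = \ell'(Y,\bbPhi(X;\ccalH,\bbW))\,\nabla_\ccalH\bbPhi(X;\ccalH,\bbW)$ by the chain rule, for two tensors $\ccalH_1,\ccalH_2$ I would add and subtract the cross term $\ell'(Y,\bbPhi(X;\ccalH_1,\bbW))\,\nabla_\ccalH\bbPhi(X;\ccalH_2,\bbW)$ and apply the triangle inequality. The first resulting term is bounded using AS\ref{as7} (the map $\nabla_\ccalH\bbPhi$ is $\AgWNN$-Lipschitz) and the fact that $\ell'$ is bounded by $1$ (AS\ref{as4}); the second term is bounded using the $1$-Lipschitz continuity of $\ell'$ (AS\ref{as4}), the $\AWNN$-Lipschitz continuity of $\bbPhi$ (AS\ref{as7}), and a uniform estimate $\|\nabla_\ccalH\bbPhi\|\leq F^{2L}\sqrt{K}$ that follows from the layered WNN structure under the non-amplifying/normalized-Lipschitz filter and activation assumptions AS\ref{as2}--AS\ref{as3}. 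Collecting the two contributions yields exactly the constant $\Anl$.

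Next I would apply the quadratic upper bound implied by $\Anl$-smoothness at $\ccalH_k$ with the increment $\ccalH_{k+1}-\ccalH_k = -\eta\,g_n$, where $g_n := \nabla_{\ccalH}\ell(\bby_n,\bbPhi(\bbx_n;\ccalH_k,\bbS_n)) = \nabla_{\ccalH}\ell(Y_n,\bbPhi(X_n;\ccalH_k,\bbW_n))$ is the graph gradient identified with the induced-graphon gradient via \eqref{eqn:wnn_induced}, and abbreviate the graphon gradient by $g_W := \nabla_{\ccalH}\ell(Y,\bbPhi(X;\ccalH_k,\bbW))$. This gives
\begin{align*}
F(\ccalH_{k+1}) \leq F(\ccalH_k) - \eta\langle g_W, g_n\rangle + \frac{\Anl\eta^2}{2}\|g_n\|^2.
\end{align*}
Using the step-size condition $\eta < \Anl^{-1}$ to bound $\Anl\eta^2/2$ by $\eta/2$, and completing the square through the identity $-\langle g_W,g_n\rangle + \frac{1}{2}\|g_n\|^2 = \frac{1}{2}\|g_n-g_W\|^2 - \frac{1}{2}\|g_W\|^2$, I obtain
\begin{align*}
F(\ccalH_{k+1}) \leq F(\ccalH_k) + \frac{\eta}{2}\|g_n-g_W\|^2 - \frac{\eta}{2}\|g_W\|^2.
\end{align*}
Taking expectation over the sampling of $\bbS_n$ and invoking the lemma's hypothesis, which bounds the gradient mismatch $g_n-g_W$ by $\|g_W\|$ (precisely the quantity estimated in Theorem \ref{thm:grad_Approximation}), the mismatch term is dominated by the descent term $\frac{\eta}{2}\|g_W\|^2$, so the right-hand side collapses to $F(\ccalH_k)=\ell(Y,\bbPhi(X;\ccalH_k,\bbW))$, which is the claim.

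I expect the main obstacle to be twofold. The first is the careful derivation of the composite smoothness constant $\Anl$: one must chain AS\ref{as4} and AS\ref{as7} cleanly and justify the uniform gradient bound $\|\nabla_\ccalH\bbPhi\|\leq F^{2L}\sqrt{K}$ propagated through the $L$ layers. The second, and more delicate, point is reconciling the moments: completing the square produces $\mbE\|g_n-g_W\|^2$, whereas the hypothesis controls the first moment $\mbE\|g_n-g_W\|$. Bridging this gap—either by reading the hypothesis at the level of the mean (bias) direction $\mbE[g_n]$, so that Jensen gives $\|\mbE[g_n]-g_W\|\leq\mbE\|g_n-g_W\|\leq\|g_W\|$, or by promoting the hypothesis to a second-moment bound via a concentration/variance argument for $g_n$—is the step that must be handled with care to make the expected descent rigorous.
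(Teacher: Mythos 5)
Your proposal follows essentially the same route as the paper: your derivation of the smoothness constant $\Anl$ is the paper's Lemma \ref{lemma:loss_grad_lipschitz} (chain rule, cross-term splitting, and the uniform bound $\|\nabla_\ccalH\bbPhi\|\leq F^{2L}\sqrt{K}$ from Lemma \ref{lemma:norm_grad_on_wnn_bound}), your quadratic descent bound is exactly what the paper derives by hand via the Bertsekas-style parametrization $g(\epsilon)$, and the completion of squares and use of the step-size condition are identical. The moment mismatch you flag at the end is a real subtlety and is in fact present in the paper's own proof, which bridges it only with the remark that squaring preserves inequalities between positive reals --- an argument that controls $\left(\mbE\|g_n-g_W\|\right)^2$ rather than $\mbE\left[\|g_n-g_W\|^2\right]$ --- so your suggestion to interpret the hypothesis at the level of the bias $\mbE[g_n]$ (via Jensen) or to promote it to a second-moment bound is, if anything, a more careful treatment than the paper's.
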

	
	{Lemma \ref{lemma:martingale} states that if the norm of the difference between the gradients on the graph and the graphon is bounded by the norm of the gradient on the graphon, then the loss on the graphon can be decreased by taking steps on the graph. In conjuction with Theorem \ref{thm:grad_Approximation}, Lemma \ref{lemma:martingale} thus gives an upper bound for the number of nodes that needed to follow the learning direction on the graphon, implying that it is always possible to construct a decreasing sequence of loss values by taking steps on the sampled graphs. 
	
	The specific condition on the number of nodes is determined by the norm of the gradient of the loss on the graphon. The intuition behind this is that, as the GNN improves its performance, the value of the loss decreases, and so the magnitude of its gradient decreases as well. Alternatively, we can think of the rate of decrease of the graphon gradient as a measure of the difficulty of the the learning problem, to which Lemma \ref{lemma:martingale} allows us to set the number of nodes accordingly.
	
	%We have shown that the learning step on the graphon and on the graph are close. Now, it remains to show the practical implications of Theorem \ref{thm:grad_Approximation} for obtaining the solution of the graphon ERM problem \eqref{eqn:ERM}. If the expected difference between the gradients is small, the iterations generated by \eqref{eqn:GNN_Learning_Step} will be able to follow the direction of the true gradient on the graphon learning problem. 
	%But because the distance between gradients is inversely proportional to $n$, we need to strike a balance between obtaining a good approximation and minimizing the computational cost. 
	
	Our main result is stated in Theorem \ref{thm:WNN_learning}, which shows that Algorithm \ref{alg:WNNL} converges if the rate at which the graph grows satisfies the condition in \eqref{eqn:thm2_condition}.

	\begin{theorem}\label{thm:WNN_learning}
		%Consider t$\sum \eta_k =\infty$,$\sum \eta_k^2 \leq\infty$. Under Assumptions \ref{as1}-\ref{as7}, if the loss $\ell(Y_n,\Phi(\bbW_n,X_n))$ converges to a finite value, and the.
		Consider the ERM problem in \eqref{eqn:ERM} and let $\bbPhi(X;\ccalH,\bbW)$ be an $L$-layer WNN with $F_0=F_L=1$, and $F_{l}=F$ for $1\leq l \leq L-1$.  Let $c \in (0,1]$, $\epsilon>0$, step size $\eta<{\Anl}^{-1}$, with $\Anl=\AnWNN+\AWNN F^{2L} \sqrt{K}$ and assume that the graphon convolutions in all layers of this WNN have $K$ filter taps [cf. \eqref{eq:graphon_convolution}]. Let $\bbPhi(\bbx_n;\ccalH,\bbS_n)$ be a GNN sampled from $\bbPhi(X;\ccalH,\bbW)$ as in \eqref{eqn:gcn_obtained}. Consider the iterates generated by equation \eqref{eqn:GNN_Learning_Step}, under Assumptions AS\ref{as1}-AS\ref{as7}, if at each step $k$ the number of nodes $n$ verifies 
% 		\bal\label{eqn:thm2_condition}
% 		C\|\nabla_{\ccalH}\ell(Y_n,\bbPhi(X_n;\ccalH_k,\bbW_n))  \|^{-1}\leq \sqrt{n}
% 		\eal
% 	\begin{align}\label{eqn:thm2_condition}
%     &\gamma c +\ccalO\bigg( \sqrt{\frac{\log(n^{3/2})}{n}}\bigg)\nonumber \\
%     &< \frac{1-\Anl\eta-\epsilon}{2} \|\nabla_{\ccalH} \ell(Y_n,\bbPhi(X_n;\ccalH_k,\bbWn)\|
%     \end{align}
%     then in finite time we will achieve an iterate $k^*$ such that the coefficients $\ccalH_{k^*}$ satisfy
%     \begin{align}\label{eqn:them2_claim}
%     \mbE_\ccalD[\|\nabla_{\ccalH}\ell(Y,\bbPhi(X;\ccalH_{k^*},\bbW))\|]\leq \gamma c+\epsilon, \quad 
%     \end{align} 
    \begin{align}\label{eqn:thm2_condition}
	      \mbE[\|\nabla_{\ccalH_{k}}&\ell(Y,\bbPhi(X;\ccalH_k,\bbW))-\nabla_{\ccalH}\ell(Y_n,\bbPhi(X_n;\ccalH,\bbW_n))\|]  \nonumber\\
	      & +\epsilon < 	\|\nabla_{\ccalH} \ell(Y,\bbPhi(X;\ccalH_k,\bbW))\|%^2  /2 F^{2L} \sqrt{K}
	\end{align}
    then Algorithm \ref{alg:WNNL} converges to an $\epsilon$-neighborhood of the solution of the Graphon Learning problem \eqref{eqn:ERM} in at most $k^*=\ccalO(1/\epsilon^2)$ iterations, where $\gamma$ is a constant that depends on the number of layers $L$, features $F$, and filter taps $K$ of the GNN [cf. Definition \ref{def:Constant_Of_GNN} in the Appendix \ref{sec:AppendixA}].
	\end{theorem}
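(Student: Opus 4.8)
The plan is to treat this as a convergence analysis of inexact (biased) gradient descent on the graphon objective $f(\ccalH):=\ell(Y,\bbPhi(X;\ccalH,\bbW))$, where the bias is precisely the graph-versus-graphon gradient gap controlled by Theorem~\ref{thm:grad_Approximation}. Write $\bbg_\bbW:=\nabla_{\ccalH}\ell(Y,\bbPhi(X;\ccalH_k,\bbW))$ for the deterministic graphon gradient at $\ccalH_k$, $\bbg_n:=\nabla_{\ccalH}\ell(Y_n,\bbPhi(X_n;\ccalH_k,\bbW_n))$ for the random graph gradient used in \eqref{eqn:GNN_Learning_Step}, and $\bbd:=\bbg_n-\bbg_\bbW$ for the bias. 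First I would use Assumption~AS\ref{as7} to argue that $f$ is $\Anl$-smooth in $\ccalH$, with $\Anl=\AnWNN+\AWNN F^{2L}\sqrt{K}$ (the composition of the $\AgWNN$-Lipschitz gradient of $\bbPhi$ with the normalized-Lipschitz loss of AS\ref{as4}). The descent lemma applied to $\ccalH_{k+1}=\ccalH_k-\eta\bbg_n$ with $\eta\le\Anl^{-1}$ then yields
\[
\mbE[f(\ccalH_{k+1})\given\ccalH_k]\le f(\ccalH_k)-\tfrac{\eta}{2}\|\bbg_\bbW\|^2+\tfrac{\eta}{2}\mbE[\|\bbd\|^2],
\]
via the identity $-\langle\bbg_\bbW,\bbg_n\rangle+\tfrac12\|\bbg_n\|^2=\tfrac12\|\bbd\|^2-\tfrac12\|\bbg_\bbW\|^2$. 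This is the quantitative strengthening of Lemma~\ref{lemma:martingale}: the lemma's weaker hypothesis only gives a supermartingale, while the strict margin in \eqref{eqn:thm2_condition} buys a definite per-step decrease.

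Next I would convert the second-moment bias term into the first-moment quantity appearing in \eqref{eqn:thm2_condition} and Theorem~\ref{thm:grad_Approximation}. Under AS\ref{as1}--AS\ref{as4}, and in particular the non-amplifying filters of AS\ref{as2}, the gradients $\bbg_\bbW$ and $\bbg_n$ are uniformly bounded, so $\|\bbd\|\le D$ almost surely and $\mbE[\|\bbd\|^2]\le D\,\mbE[\|\bbd\|]$. Theorem~\ref{thm:grad_Approximation} bounds $\mbE[\|\bbd\|]$ by $\gamma c+\ccalO(\sqrt{\log(n^{3/2})/n})$, so choosing $c$ small and $n$ large forces the bias second moment below any prescribed level, in particular below $\tfrac12\epsilon^2$; this is exactly the role of the node-growth step in Algorithm~\ref{alg:WNNL}, with the required $n$ at each step dictated by \eqref{eqn:thm2_condition} as in Lemma~\ref{lemma:martingale}.

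Finally I would run a telescoping argument. While iterate $k$ has not entered the target neighborhood, \eqref{eqn:thm2_condition} holds, hence $\|\bbg_\bbW\|>\mbE\|\bbd\|+\epsilon\ge\epsilon$. Summing the descent bound over $k=0,\dots,K-1$ and using $f\ge0$ gives $\tfrac{\eta}{2}\sum_k\|\bbg_\bbW^{(k)}\|^2\le f(\ccalH_0)+\tfrac{\eta}{2}\sum_k\mbE\|\bbd_k\|^2$; substituting $\|\bbg_\bbW^{(k)}\|^2>\epsilon^2$ on the left and $\mbE\|\bbd_k\|^2<\tfrac12\epsilon^2$ on the right yields $\tfrac{\eta}{4}K\epsilon^2<f(\ccalH_0)$, i.e. $K<4f(\ccalH_0)/(\eta\epsilon^2)=\ccalO(1/\epsilon^2)$. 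Thus within $\ccalO(1/\epsilon^2)$ iterations \eqref{eqn:thm2_condition} must fail, at which point $\|\bbg_\bbW\|\le\epsilon+\mbE\|\bbd\|$, placing $\ccalH_k$ in an $\epsilon$-neighborhood (up to the residual bias $\gamma c$) of a stationary point of \eqref{eqn:ERM}; the supermartingale structure of Lemma~\ref{lemma:martingale} upgrades this to the ``with probability one'' form of the informal statement.

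I expect the main obstacle to be the bias-moment mismatch: the descent inequality naturally produces $\mbE\|\bbd\|^2$, whereas both the hypothesis \eqref{eqn:thm2_condition} and the transferability estimate of Theorem~\ref{thm:grad_Approximation} are stated in terms of $\mbE\|\bbd\|$. Bridging these requires the almost-sure boundedness of the gradients, and one must check that the constant $D$ (and hence the $n$ needed to push the second moment below $\tfrac12\epsilon^2$) does not blow up with $L,F,K$; establishing this uniform bound from AS\ref{as1}--AS\ref{as4} is the delicate part. A secondary subtlety is that $\|\bbg_\bbW\|$ shrinks as the iterates approach stationarity, so the node count must grow accordingly to keep \eqref{eqn:thm2_condition} satisfiable, which is what ties the rate $\ccalO(1/\epsilon^2)$ to the scheduling of graph sizes in Algorithm~\ref{alg:WNNL}.
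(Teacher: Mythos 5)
Your proposal is correct in substance and follows the same overall strategy as the paper's proof: a smoothness-based descent inequality for the biased gradient step, a per-step decrease of order $\eta\epsilon^2$ extracted from the margin in \eqref{eqn:thm2_condition}, and a telescoping-plus-nonnegativity argument yielding $k^*=\ccalO(1/\epsilon^2)$. The differences are in execution, and they cut in your favor on one key point. The paper packages the descent inequality into Lemma~\ref{lemma:martingale} and runs a stopping-time argument: it defines $k^*$ as the first $k$ at which $\mbE_{\ccalD}[\|\nabla_{\ccalH}\ell(Y,\bbPhi(X;\ccalH_k,\bbW))\|]\leq \gamma c+\epsilon$, telescopes the expected loss up to $k^*$, lower-bounds each increment by $\eta\gamma\epsilon^2$ via the lemma, and concludes $\mbE[k^*]\leq \mbE[\ell(Y,\bbPhi(X;\ccalH_0,\bbW))]/(\eta\gamma\epsilon^2)$; you instead re-derive the descent inequality inline and count iterations directly until \eqref{eqn:thm2_condition} fails. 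These are equivalent bookkeeping devices. The genuinely different (and more careful) part of your proposal is the treatment of the obstacle you flag: the descent inequality produces $\mbE[\|\bbd\|^2]$, while both \eqref{eqn:thm2_condition} and Theorem~\ref{thm:grad_Approximation} control only $\mbE[\|\bbd\|]$. The paper does not actually resolve this: in the last step of the proof of Lemma~\ref{lemma:martingale} it argues that ``if $a>b$, $a,b\in\reals_+$, then $a^2>b^2$,'' which silently replaces $\mbE[\|\bbd\|^2]$ by $(\mbE[\|\bbd\|])^2$, and Jensen's inequality goes the wrong way for that substitution. Your bridge via almost-sure boundedness is legitimate and uses exactly the ingredients available in the paper: by Lemma~\ref{lemma:norm_grad_on_wnn_bound} and AS\ref{as4} both gradients are bounded by $F^{2L}\sqrt{K}$, which is the bound \eqref{eqn:Theo1_bound_A_c} already invoked in the proof of Theorem~\ref{thm:grad_Approximation}, so $\mbE[\|\bbd\|^2]\leq 2F^{2L}\sqrt{K}\;\mbE[\|\bbd\|]$.

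One caveat you should make explicit: with this patch, the per-step decrease requires choosing $c$ and $n$ so that $2F^{2L}\sqrt{K}\,\mbE[\|\bbd\|]<\tfrac12\epsilon^2$, which is strictly stronger than the literal first-moment hypothesis \eqref{eqn:thm2_condition}. Indeed, under \eqref{eqn:thm2_condition} alone, writing $D=2F^{2L}\sqrt{K}$ and $g=\|\nabla_{\ccalH}\ell(Y,\bbPhi(X;\ccalH_k,\bbW))\|$, the chain of bounds only gives $\mbE[\|\bbd\|^2]\leq D(g-\epsilon)$, and $D(g-\epsilon)$ can exceed $g^2$ for moderate $g$ when $D>4\epsilon$, so descent is not guaranteed. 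In other words, your proof (like the paper's, once the moment conflation is accounted for) establishes the theorem under a mildly strengthened node-growth condition; surfacing that condition is an improvement over the paper's presentation rather than a defect of yours.
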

\begin{proof} For every $\epsilon>0$, we define the stopping time $k^*$ as
\begin{align}
    k^*:=\min_{k\geq 0}\{\mbE_\ccalD[\|\nabla_{\ccalH} \ell(Y,\bbPhi(X;\ccalH_k,\bbW))\|]\leq  \gamma c+\epsilon \}.
\end{align}
Given the iterates at $k=k^*$ and the initial values at $k=0$, we can express the expected difference between the loss $\ell$ as the summation over the difference of iterates,
\begin{align}
    &\mbE[\ell(Y,\bbPhi(X;\ccalH_{0},\bbW))-\ell(Y,\bbPhi(X;\ccalH_{k^*},\bbW))]=\nonumber\\
    &\mbE\left[\sum_{k=1}^{k^*}\ell(Y,\bbPhi(X;\ccalH_{k-1},\bbW))-\ell(Y,\bbPhi(X;\ccalH_{k},\bbW)\right] \text{.} \nonumber
\end{align}
    Taking the expected value with respect to the final iterate $k=k^*$, we get
\begin{align}
    &\mbE\bigg[\ell(Y,\bbPhi(X;\ccalH_{k^0},\bbW))-\ell(Y,\bbPhi(X;\ccalH_{k^*},\bbW))\bigg]\nonumber\\
    &=\mathop{\mbE}_{k^*}\bigg[\mbE\bigg[\sum_{k=1}^{k^*}\ell(Y,\bbPhi(X;\ccalH_{k-1},\bbW))-\ell(Y,\bbPhi(X;\ccalH_{k},\bbW)\bigg]\bigg]\nonumber\\
    &=\sum_{t=0}^\infty \mbE\bigg[\sum_{k=1}^{t}\ell(Y,\bbPhi(X;\ccalH_{k-1},\bbW))\nonumber\\
    &\quad\quad\quad-\ell(Y,\bbPhi(X;\ccalH_{k},\bbW)\bigg]P(k^*=t) \text{.}\label{eqn:lemma6_law_of_total_probability}
\end{align}
Lemma \ref{lemma:martingale} applied to any $k\leq k^*$ verifies
\begin{align}
    &\mbE\bigg[\ell(Y,\bbPhi(X;\ccalH_{k-1},\bbW))-\ell(Y,\bbPhi(X;\ccalH_{k},\bbW))\bigg]\nonumber\geq \eta \gamma \epsilon^2 \text{.}
\end{align}
Coming back to \eqref{eqn:lemma6_law_of_total_probability}, we get
\begin{align}
    \mbE\bigg[\ell(Y,\bbPhi(X;\ccalH_{k^0},&\bbW))-\ell(Y,\bbPhi(X;\ccalH_{k^*},\bbW))\bigg]\nonumber\\
    &\geq \eta\gamma \epsilon^2 \sum_{t=0}^\infty t P(k^*=t)= \eta \gamma \epsilon^2 \mbE[k^*] \text{.}\nonumber
\end{align}
Since the loss function $\ell$ is non-negative, 
\begin{align}
    \frac{\mbE\bigg[\ell(Y,\bbPhi(X;\ccalH_{k^0},\bbW))\bigg]}{ \eta \gamma \epsilon^2 }
    &\geq  \mbE[k^*], \nonumber
\end{align}
from which we conclude that $k^* =\ccalO(1/\epsilon^2)$.
\end{proof}

\begin{figure*}[t]
     \centering
     \begin{subfigure}[b]{0.45\textwidth}
         \centering
         \includegraphics[width=\textwidth]{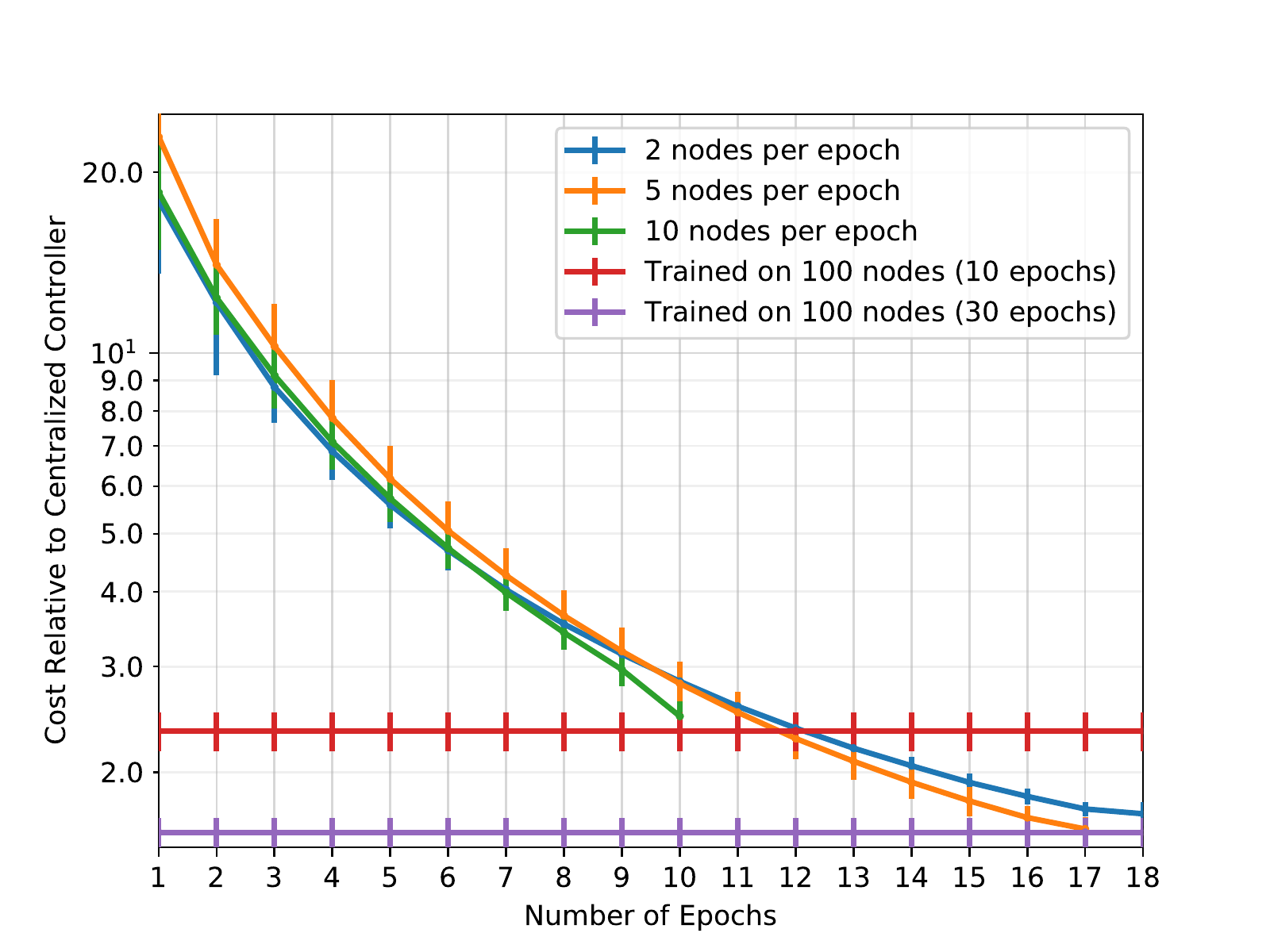}
         \caption{Starting with $10$ nodes}
     \end{subfigure}
     \hfill
     \begin{subfigure}[b]{0.45\textwidth}
         \centering
         \includegraphics[width=\textwidth]{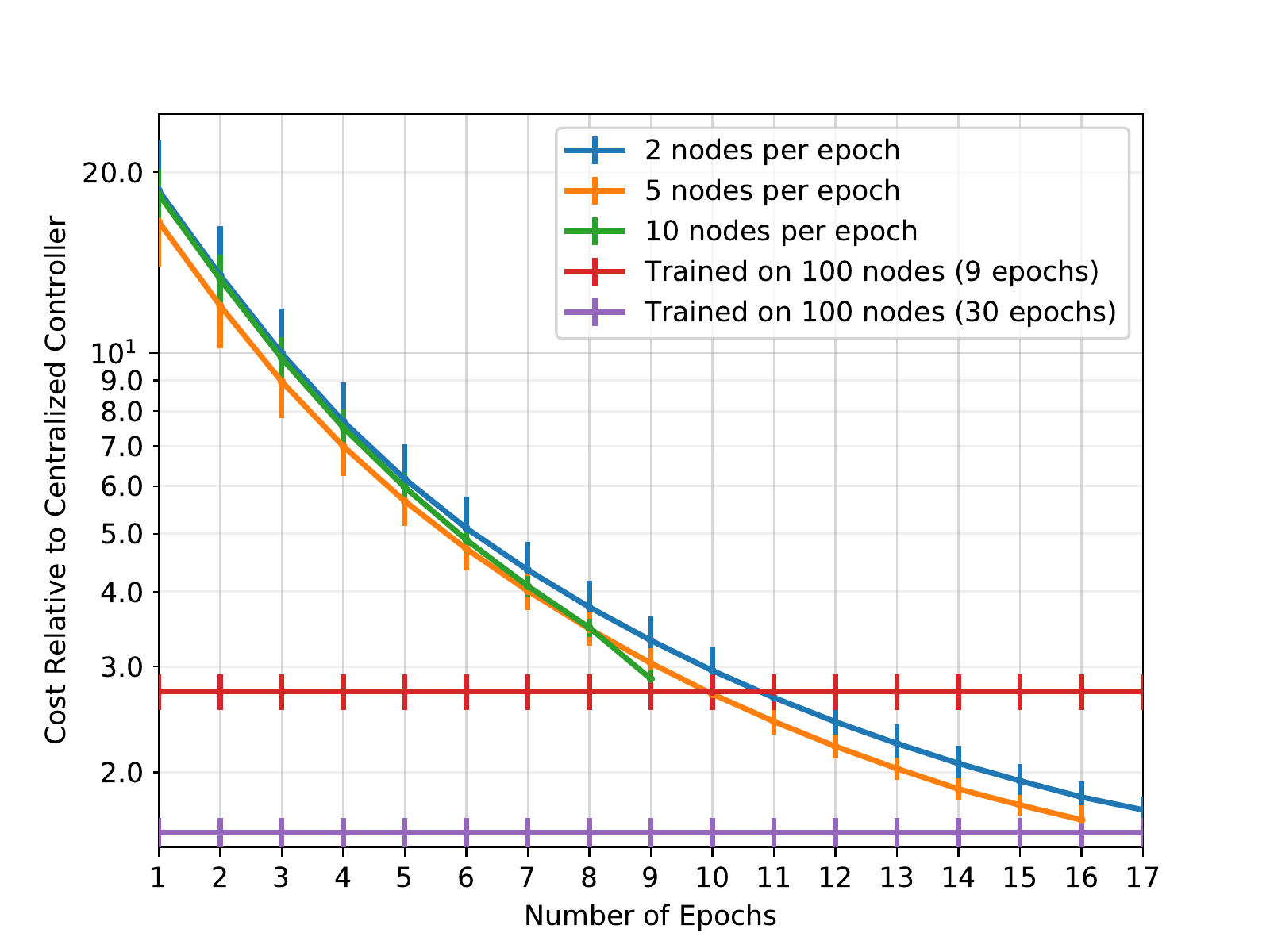}
         \caption{Starting with $20$ nodes}
     \end{subfigure}
	\caption{Velocity variation of the flocking problem for the whole trajectory in the testing set relative to the centralized controller.}
	\label{fig:flocking}
\end{figure*}

%Theorem \ref{thm:WNN_learning} presents the conditions under which Algorithm \ref{alg:WNNL} converges. 
The condition in \eqref{eqn:thm2_condition} implies that the rate of decrease in the norm of the gradient of the loss on the GNN
%of the loss function computed on the graph neural network 
needs to be slower than roughly $n^{-1/2}$. If the norm of the gradient does not vary between iterations, the number of nodes does not need to be increased.
We also see that the convergence of the gradient on the graphon given by Theorem \ref{thm:grad_Approximation} is equal to the bias term in \eqref{eq:thm_1} plus a constant $\epsilon$. We can keep increasing the number of nodes---and thus decreasing the bias---until the norm of the GNN gradient is smaller than the nontransferable constant value. Once this point is attained, there is no gain in decreasing the approximation bound any further \cite{ajalloeian2020analysis}. Recall that the constant term can be made as small as desired by tuning $c$, at the cost of decreasing the approximation bound. Further, note that assuming smoothness of the GNN is a mild assumption \cite{scaman2018lipschitz,jordan2020exactly,latorre2020lipschitz,fazlyab2019efficient,tanielian2021approximating,du2019gradient}. A characterization of the Lipschitz constant is out of the scope of this work. 
%In Algorithm \ref{alg:WNNL} we implement the desired behavior. By increasing the size of the graph $\bbG_n$ at every epoch, we are able to follow the gradient on the graphon learning problem. The advantage of Algorithm \ref{alg:WNNL} versus number of nodes fixed is twofold. 

% \red{
% \subsection{Discussion}
% \label{subsec:Discussion}
% L: I believe that this subsection is redundant and unnecessary. I would remove it. If there are things being said here that you didn't said in the paragraphs after each lemma/theorem, you can add them there.

% {In this section we have introduced three main theoretical contributions that support the idea of training on growing graphs. First, we introduced Theorem \ref{thm:grad_Approximation}, which quantifies the distance between the norm on the graphon, and the norm on the graph as a function of the number of nodes. Second, we are able to construct a decreasing sequence on the loss on the graphon by setting the minimum number of nodes needed to follow the learning direction. We then introduce an Algorithm that increases the number of nodes as we learn, the intuition being that as the GNN improves, the gradient of the loss on the graphon decreases, which requires a better approximation between the gradients. Third, we show that Algorithm \ref{alg:WNNL} converges to a neighborhood of the first order stationary point of the graphon learning problem in finite time. 
% } 
% }

\section{Numerical Results}
\label{sec:Numericals}

%\subsection{Decentralized Control}
%\label{subsec:flocking}

%\input{figures/fig_snapshots.tex}

%A decentralized controller only relies on local information provided by other adjacent agents in the network. 

In this section we consider the problem of coordinating a set of $n$ agents initially flying at random to avoid collisions, and to fly at the same velocity. Also known as flocking, at each time $t$ agent $i$ knows its own position $r_i(t)\in \reals^2$, and speed $v_i(t)\in\reals^2$, and reciprocally exchanges it with its neighboring nodes if a communication link exists between them. Links are govern by physical proximity between agents forming a time varying graph $\bbG_n=(\ccalV,\ccalE)$. A communication link exists if the distance between two agents $i,j$ satisfies 
\begin{align}
    r_{ij}(t)=\|r_i(t)-r_j(t)\|\leq R=2m.
\end{align}
We assume that at each time $t$ the controller sets an acceleration $u_i \in [-10,10]^2$, and that it remains constant for a time interval $T_s=20ms$. The system dynamics are govern by,
\begin{align}
    r_i(t+1)&=u_i(t)T_s^2/2+v_i(t)T_s+r_i(t)\\
    v_i(t+1)&=u_i(t)T_s+v_i(t).
\end{align}
To avoid the swarm of robots to reach a null common velocity, we initialize the velocities at random $\bbv(t)=[v_1(t),\dots,v_n(t)]$, by uniformly sampling a common bias velocity $v_{BIAS}\sim \ccalU[-3,3]$, and then adding independent uniform noise $\ccalU[-3,3]$ to each agent.  The initial deployment is randomly selected in a circle always verifying that the minimum distance between two agents is larger than $0.1m$. On the one hand, agents pursue a common average velocity $\bar\bbv=(1/n)\sum_{i=1}^n v_i(t)$, thus minimizing the velocity variation of the team. On the other hand, agents are required to avoid collision. We can thus define the velocity variation of the team
\begin{align}
\sigma_{\bbv(t)}=\sum_{i=1}^{n}\|v_i(t)-\bar \bbv(t)\|^2,
\end{align}
and the collision avoidance potential 
\begin{align*}
\begin{split}
CA_{ij}=\begin{cases}
	\frac{1}{\|r_i-r_j\|^2}-\log(\|r_i-r_j\|^2) & \text{if $\|r_i-r_j\| \leq R_{CA}$}\\
	\frac{1}{R_{CA}^2}-\log(R_{CA}^2) & \text{otherwise,}
\end{cases} 
\end{split}
\end{align*}
with $R_{CA}=1m$. A centralized controller can be obtain by $u_i(t)^*=-n(v_i-\bar \bbv)+\sum_{j=1}^n\nabla_{r_i}CA(r_i,r_j)$ \cite{tanner2003stable}.

Exploiting the fact that neural networks are universal approximators \cite{barron1993universal,hornik1991approximation}, \textit{Imitation learning} can be utilized as a framework to train neural networks from information provided by an expert \cite{ross2011reduction,ross2010efficient}. Formally, we have a set of pairs $\ccalT= \{ \bbx_{m},\bbu^*_{m} \},m=1,\dots,M$, and during training we minimize the mean square error between the optimal centralized controller, and the output of our GNN $\|\bbu^*_{m}-\bbPhi(\bbx_m;\ccalH,\bbS)\|^2$. Denoting $\ccalN_i(t)$ the neighborhood of agent $i$ at time $t$, the state of the agents $\bbx(t)=[x(t)_{1},\dots,x(t)_{n}], x_i(t)\in \reals^6$, is given by
$$
x_i(t)=\sum_{j:j\in \ccalN_i(t)}\bigg[v_i(t)-v_j(t),\frac{r_{ij}(t)}{\|r_{ij}(t)\|^4},\frac{r_{ij}(t)}{\|r_{ij}(t)\|^2}\bigg]\text{.}
$$
Note that state $x_i(t)$, gets transmitted between agents if a communication link exists between them. 

In Figure \ref{fig:flocking} we can see the empirical manifestation of the claims we put forward. First and foremost, we are able to learn a GNN that achieves a comparable performance while taking steps on a smaller graphs. As seen in Figure \ref{fig:flocking}, GNNs trained with $n_0=\{10,20\}$ agents in the first epoch and adding $10$ agents per epoch (green line) are able to achieve a similar performance when reaching $100$ agents than the one they would have achieved by training with $100$ agents the same number of epochs. Besides, if we add less nodes per epoch, we are able to achieve a similar performance that we would have achieved by training on the large network for $30$ epochs. %We are also able to conclude that the initial number of nodes required for the algorithm to be used can be as small as a tenth of the size of the larger graph.

\section{Conclusions}
\label{sec:Conclusions}
We have introduced a learning procedure for GNNs that progressively grows the size of the graph while training. Our algorithm requires less computational cost---as the number of nodes in the graph convolution is smaller---than training on the full graph without compromising performance. Leveraging transferability results, we bounded the expected difference between the gradient on the GNN, and the gradient on the WNN. {Utilizing this result, we provided the theoretical guarantees that our Algorithm converges to a neighborhood of a first order stationary point of the WNN in finite time.} We benchmarked our algorithm on a recommendation system and a decentralized control problem, achieving comparable performance to the one achieve by a GNN trained on the full graph.

\appendices

\section{Proof of Theorem \ref{thm:grad_Approximation}}
\label{sec:AppendixA}

\begin{proposition} \label{prop:x-x'}
	Let $X \in L_2([0,1])$ be a normalized Lipschitz graphon signal, and let $X_n$ be the graphon signal induced by the graph signal $\bbx_n$ obtained from $X$ on the template graph $\mbG_n$, i.e., $[\bbx_n]_i = X((i-1)/n)$ for $1 \leq i \leq n$. It holds that
	\begin{align}
	\|X-X_n\| \leq \dfrac{{1}}{n} .
	\end{align}
\end{proposition}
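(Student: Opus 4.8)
The plan is to reduce the $L^2([0,1])$ estimate to an elementary \emph{pointwise} bound that follows directly from the Lipschitz smoothness of $X$, and then integrate. The key observation is that $X_n$ is piecewise constant on the partition $\{I_i\}$, so on each interval $I_i$ the discrepancy $X(u)-X_n(u)$ is just the deviation of $X$ from its value at the left endpoint $(i-1)/n$, which the Lipschitz property controls by the mesh size.

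First I would write $X_n$ explicitly. By the induced-signal construction [cf. \eqref{eqn:wnn_induced}], $X_n$ is constant on each interval $I_i=[(i-1)/n,i/n)$ and takes there the value $[\bbx_n]_i = X((i-1)/n)$; that is, $X_n(u)=X((i-1)/n)$ for $u \in I_i$. Next, fixing $u \in I_i$ and invoking that $X$ is normalized Lipschitz (AS\ref{as1}, Definition \ref{def:norm_lips} with $A=1$), I would bound
\begin{align}
|X(u)-X_n(u)| = |X(u)-X((i-1)/n)| \leq |u-(i-1)/n| \leq \frac{1}{n},
\end{align}
where the last step uses that both $u$ and $(i-1)/n$ lie in the interval $I_i$ of length $1/n$. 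Since the $\{I_i\}$ partition $[0,1]$, this gives $|X(u)-X_n(u)| \leq 1/n$ for almost every $u \in [0,1]$. Finally I would integrate the squared error,
\begin{align}
\|X-X_n\|^2 = \int_0^1 |X(u)-X_n(u)|^2 \, du \leq \int_0^1 \frac{1}{n^2} \, du = \frac{1}{n^2},
\end{align}
and take square roots to conclude $\|X-X_n\| \leq 1/n$.

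There is no substantial obstacle here; the result is elementary. The only point demanding a little care is the verification that the per-interval displacement $|u-(i-1)/n|$ is bounded by the mesh size $1/n$, which is immediate from the definition of $I_i$. I note that a sharper constant (by a factor of $\sqrt{3}$) is available by integrating $|u-(i-1)/n|^2$ exactly over each $I_i$ instead of using the crude uniform pointwise bound, but since the statement only asks for $1/n$ the simpler argument suffices.
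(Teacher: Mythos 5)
Your proof is correct and takes essentially the same route as the paper's: a pointwise Lipschitz bound on each partition interval $I_i$ followed by integration of the squared error. In fact your pointwise step is marginally cleaner, since you bound directly by $|u-(i-1)/n|$ rather than the paper's $\max\left(\left|u-\tfrac{i-1}{n}\right|,\left|\tfrac{i}{n}-u\right|\right)$, which is unnecessary given that $X_n$ takes the left-endpoint value.
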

\begin{proof}
	Let $I_i = [(i-1)/n, i/n)$ for $1 \leq i \leq  n-1$ and $I_n = [(n-1)/n,1]$. 
	Since the graphon is normalized Lipschitz, for any $u \in I_i$, $1 \leq i \leq n$, we have
	\begin{align*}
	\|X(u) - X_n(u)\| \leq \max\left(\left|u-\frac{i-1}{n}\right|,\left|\frac{i}{n}-u\right|\right) \leq \frac{1}{n}\nonumber.
	\end{align*}
	We can then write
	\begin{align*}
	\|X-X_n\|^2 &= \int_0^1 |X(u) - X_n(u)|^2 du \\
	&\leq \int_0^1 \left(\frac{1}{n}\right)^2 du = \left(\frac{1}{n}\right)^2,
	\end{align*}
	which completes the proof.
\end{proof}

\begin{proposition} \label{prop:w-w'}
	Let $\bbW: [0,1]^2 \to [0,1]$ be a normalized Lipschitz graphon, and let $\mbW_n := \bbW_{\mbG_n}$ be the graphon induced by the template graph $\mbG_n$ generated from $\bbW$. It holds that
	\begin{equation}
	\|\bbW-\mbW_n\| \leq \dfrac{2}{n} \text{.}
	\end{equation}
\end{proposition}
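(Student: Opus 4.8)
The plan is to replicate, in two dimensions, the one-dimensional argument used for Proposition~\ref{prop:x-x'}. First I would invoke the construction of the induced graphon (cf.\ \eqref{eqn:wnn_induced}) applied to the template graph $\mbG_n$: restricted to the rectangle $I_i \times I_j$, the step graphon $\mbW_n$ is constant and equal to $[\mbS_n]_{ij} = \bbW(u_i,u_j)$ with $u_i = (i-1)/n$. Hence on each such rectangle the pointwise difference reduces to $\bbW(u,v) - \mbW_n(u,v) = \bbW(u,v) - \bbW(u_i,u_j)$, so the problem becomes one of controlling how much $\bbW$ varies across a single cell of the partition $\{I_i \times I_j\}$.

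Next I would apply the normalized Lipschitz property of $\bbW$ (AS\ref{as1}, cf.\ Definition~\ref{def:norm_lips}) in both of its arguments. For $u \in I_i$ and $v \in I_j$ we have $|u - u_i| \le 1/n$ and $|v - u_j| \le 1/n$, so the Lipschitz bound yields the uniform pointwise estimate
\begin{equation*}
|\bbW(u,v) - \mbW_n(u,v)| \le |u-u_i| + |v-u_j| \le \frac{2}{n}.
\end{equation*}
The extra factor of two relative to Proposition~\ref{prop:x-x'} is precisely the contribution of the second coordinate, each variable contributing $1/n$.

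Finally, since this bound holds uniformly over every rectangle and the rectangles tile $[0,1]^2$, I would integrate the squared estimate over the unit square to obtain
\begin{equation*}
\|\bbW - \mbW_n\|^2 = \int_0^1\!\!\int_0^1 |\bbW(u,v) - \mbW_n(u,v)|^2\, du\, dv \le \left(\frac{2}{n}\right)^2,
\end{equation*}
and take square roots to conclude. There is no genuine obstacle here: the only things to track are the two-dimensional Lipschitz estimate and the fact that the bound is uniform across all partition cells, which makes the integration immediate. It is worth noting that this is a \emph{deterministic} statement about the template graphon $\mbW_n$, not about the stochastic $\bbW_n$ sampled via Bernoulli draws in \eqref{eqn:BernoulliSampledGraph}, so no concentration argument enters at this stage.
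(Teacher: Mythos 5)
Your proposal is correct and follows essentially the same route as the paper's proof: a pointwise Lipschitz estimate of $2/n$ on each cell $I_i \times I_j$ (each coordinate contributing $1/n$), followed by integrating the squared bound over $[0,1]^2$. Your observation that the statement is deterministic (about the template graphon, not the Bernoulli-sampled one) is also accurate, and if anything your phrasing via the distance to the sample point $u_i = (i-1)/n$ is slightly cleaner than the paper's max-over-endpoints formulation.
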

\begin{proof}
	%Writing $\bbDelta = \bbW-\bbW_\bbG$ and 
	Let $I_i = [(i-1)/n, i/n)$ for $1 \leq i \leq  n-1$ and $I_n = [(n-1)/n,1]$. 
	Since the graphon is Lipschitz, for any $u \in I_i$, $v \in I_j$, $1 \leq i,j \leq n$, we have
	\begin{align*}
	\|\bbW(u,v)-\mbW_n(u,v)\| &\leq \max\left(\left|u-\frac{i-1}{n}\right|,\left|\frac{i}{n}-u\right|\right) \nonumber\\
	&+ \max\left(\left|v-\frac{j-1}{n}\right|,\left|\frac{j}{n}-v\right|\right) \nonumber\\
	&\leq \frac{1}{n} + \frac{1}{n} = \frac{2}{n}\text{.}
	\end{align*}
	We can then write
	\begin{align*}
	\|\bbW-\mbW_n\|^2 &= \int_0^1 |\bbW(u,v)-\mbW_n(u,v)|^2 du dv \\
	&\leq \int_0^1 \left(\frac{2}{n}\right)^2 du dv = \left(\frac{2}{n}\right)^2
	\end{align*}
	which concludes the proof.
\end{proof}

\begin{lemma}\label{lemma:norm_grad_on_wnn_bound}
	Consider the $L$-layer WNN given by $Y=\bbPhi(X; \ccalH, \bbW)$, where $F_0=F_L=1$ and $F_\ell=F$ for $1 \leq \ell \leq L-1$. Let $c\in (0,1]$ and assume that the graphon convolutions in all layers of this WNN have $K$ filter taps [cf. \eqref{eq:graphon_convolution}]. Under Assumptions \ref{as1} through \ref{as3}, the norm of the gradient of the WNN with respect to its parameters $\ccalH=\{\bbH_{lk}\}_{l,k}$ can be upper bounded by, 
	\bal
	\|\nabla_{\ccalH}\bbPhi(X;\ccalH,\bbW)\| \leq F^{2L} \sqrt{K}.
	\eal
\end{lemma}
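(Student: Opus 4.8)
The plan is to bound the Jacobian $\nabla_\ccalH \bbPhi(X;\ccalH,\bbW)$ by controlling separately how signal energy propagates \emph{forward} through the WNN and how output sensitivities propagate \emph{backward} through it, and then to account for the individual coefficients $[\bbH_{lk}]_{gf}$. The two structural facts that keep every factor under control are: (i) under AS\ref{as2} each graphon convolution is non-amplifying, so that the filter operator satisfies $\|T_\bbh\| = \sup_i |h(\lambda_i)| \le 1$; moreover, since the eigenvalues of $T_\bbW$ lie in $[-1,1]$ we have $\|T_\bbW^{(k)}\| \le 1$ for every $k$; and (ii) under AS\ref{as3} the nonlinearity obeys $\rho(0)=0$ with $|\rho'|\le 1$, so $\|\rho(Z)\| \le \|Z\|$ and the pointwise multiplier $\rho'(\cdot)$ is a contraction in $L^2$. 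The crucial point is that AS\ref{as2} constrains the combined effect of the weights $\{[\bbH_{lk}]_{gf}\}_k$ (they \emph{are} the filter taps), so the only source of growth is the summation over features, not the magnitude of the parameters.

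First I would establish a forward bound on the feature signals by induction on the layer index. Writing the $f$th pre-activation of layer $l$ as $Z_l^f = \sum_{g=1}^{F_{l-1}} T_{\bbh_{lf}^g} X_{l-1}^g$ [cf. \eqref{eqn:wcn_layer}], the triangle inequality together with (i)--(ii) gives $\|X_l^f\| = \|\rho(Z_l^f)\| \le \sum_{g=1}^{F_{l-1}} \|X_{l-1}^g\|$. Since $F_0=1$, $F_l=F$ for $1\le l\le L-1$, and $\|X\|\le 1$ under the normalization of AS\ref{as1}, this recursion telescopes to $\|X_l^f\| \le \prod_{j=1}^l F_{j-1} = F^{l-1}$, i.e.\ a forward amplification of at most $F^{L}$ up to the output layer.

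Next I would write the partial derivative of the scalar output $Y=X_L$ with respect to a single coefficient via the chain rule. Only the $f$th feature of layer $l$ depends explicitly on $[\bbH_{lk}]_{gf}$, through the term $T_\bbW^{(k)} X_{l-1}^g$, so propagating this perturbation forward yields
\begin{align}
\frac{\partial Y}{\partial [\bbH_{lk}]_{gf}} = D_{l\to L}^{f}\Big[\rho'(Z_l^f)\odot \big(T_\bbW^{(k)} X_{l-1}^g\big)\Big],\nonumber
\end{align}
where $D_{l\to L}^{f}$ is the forward-map Jacobian from the $f$th feature at layer $l$ to the output, a composition of pointwise multiplications by $\rho'$ and of graphon convolutions, summed over feature paths. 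By (i)--(ii) each such path is a composition of contractions, and summing $F$ features per layer across the remaining layers amplifies the sensitivity by at most $F^{L}$; combined with the forward bound $\|T_\bbW^{(k)} X_{l-1}^g\| \le \|X_{l-1}^g\| \le F^{L}$, each term is controlled by the product of a backward factor and a forward factor, each at most $F^{L}$.

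Finally I would assemble the full norm: grouping the $K$ taps of a fixed feature pair into one filter vector and summing the squared contributions over $k$ produces a factor $\sqrt{K}$, while the forward amplification ($\le F^{L}$) times the backward sensitivity ($\le F^{L}$) produces $F^{2L}$, giving $\|\nabla_\ccalH \bbPhi(X;\ccalH,\bbW)\| \le F^{2L}\sqrt{K}$. I expect the backward step to be the main obstacle: one must verify that \emph{every} factor entering $D_{l\to L}^{f}$ is genuinely a contraction in $L^2$ and that the only accumulation is the per-layer feature sum, so that the geometric growth over $L$ layers is exactly a power of $F$. This is precisely where the non-amplifying filter hypothesis (AS\ref{as2}) and the normalized-Lipschitz activation with $\rho(0)=0$ (AS\ref{as3}) are indispensable, since without them the per-layer operator norms could exceed one and the accumulation over $L$ layers would no longer be controlled.
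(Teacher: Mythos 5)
Your overall strategy---bounding each scalar partial derivative $\partial Y/\partial [\bbH_{lk}]_{gf}$ via the chain rule, using the non-amplifying filters (AS\ref{as2}) and the normalized-Lipschitz activation (AS\ref{as3}) so that every factor in the forward and backward passes is a contraction, and then assembling the Euclidean norm of the full gradient---is the same as the paper's. The gap is in the assembly step. The gradient $\nabla_{\ccalH}\bbPhi(X;\ccalH,\bbW)$ is a vector indexed by \emph{all} entries of $\ccalH$: all layers $l$, all feature pairs $(g,f)$, and all taps $k$, i.e.\ on the order of $LKF^2$ scalar entries. If each entry is bounded by a constant $B$, the Euclidean norm is bounded by $B$ times the square root of the \emph{total} number of entries; your assembly multiplies your per-entry bound $B=F^{2L}$ only by $\sqrt{K}$ (the taps of ``a fixed feature pair''), silently dropping the aggregation over layers and feature pairs. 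Carried out correctly with your own numbers, the bound becomes $F^{2L}\sqrt{K}\cdot\sqrt{LF^2}=F^{2L+1}\sqrt{LK}$, which exceeds the claimed $F^{2L}\sqrt{K}$, so the proof as written does not establish the lemma.

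The reason the paper's version of the same argument closes is that its per-entry bound is much tighter than yours, leaving slack to absorb the entry count. Your per-entry bound $F^{2L}$ multiplies a worst-case forward factor $F^{L}$ by a worst-case backward factor $F^{L}$, but these two maxima can never occur simultaneously: if the parameter sits at layer $l$, the forward amplification runs only over layers $1,\dots,l-1$ (giving roughly $F^{l-2}$, and just $\|X_0\|\leq 1$ when $l=1$), while the backward sensitivity runs over layers $l,\dots,L$ (giving roughly $F^{L-l}$), so their product is at most $F^{L-1}$ uniformly in $l$. This is exactly the paper's observation that the worst case is a parameter in the first layer, where the derivative of the filter with respect to a single tap is itself a non-amplifying (delta) filter and the bound is $F^{L-1}\|X_0\|\leq F^{L-1}$. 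With that per-entry bound, aggregating over the whole tensor (the paper counts $F^{L-1}K$ entries) gives $F^{L-1}\sqrt{F^{L-1}K}=F^{3(L-1)/2}\sqrt{K}\leq F^{2L}\sqrt{K}$, and the lemma follows. To repair your proof you therefore need both corrections: make the forward and backward factors complementary (they partition the $L$ layers) rather than independent, and include the full entry count of $\ccalH$, not just the $K$ taps, in the final norm.
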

\begin{proof}
	We will find an upper bound for any element $\Hdag$ of the tensor $\ccalH$. We start by the last layer of the WNN, applying the definition given in \eqref{eqn:wcn_layer}
	\bal
	&\|\nabla_{\Hdag}\bbPhi(X;\ccalH,\bbW)\|=\bigg\|\nabla_\Hdag X^f_{L}\bigg\| \nonumber \\
	&=\bigg\|\nabla_\Hdag \rho\left(\sum_{g=1}^{F_{l-1}} \sum_{k=1}^{K-1} (T_{\bbW}^{(k)} X^g_{l-1})[\bbH_{Lk}]_{gf} \right)\bigg\|. \nonumber
	\eal
	By Assumption \ref{as3}, the nonlinearity $\rho$ is normalized Lipschitz, i.e., $\nabla \rho(u)\leq 1 $ for all $u$. Thus, applying the chain rule for the derivative and the Cauchy-Schwartz inequality, the right hand side can be rewritten as
	\bal
	&\|\nabla_{\Hdag}\bbPhi(X;\ccalH,\bbW)\|\nonumber\\
	&=\bigg\|\nabla  \rho\left(\sum_{g=1}^{F_{l-1}} \sum_{k=1}^{K-1} (T_{\bbW}^{(k)} X^g_{l-1})[\bbH_{Lk}]_{gf} \right) \bigg\|\nonumber\\
	&\times \bigg\| \nabla_\Hdag \sum_{g=1}^{F_{l-1}} \sum_{k=1}^{K-1} (T_{\bbW}^{(k)} X^g_{l-1})[\bbH_{Lk}]_{gf}\bigg\|\nonumber\\
	&\leq\bigg\| \nabla_\Hdag \sum_{g=1}^{F_{l-1}} \sum_{k=1}^{K-1} (T_{\bbW}^{(k)} X^g_{l-1})[\bbH_{Lk}]_{gf}\bigg\| \text{.}\nonumber
	\eal
	Note that the a larger bound will occur if $l^\dagger < L-1$, so by linearity of derivation and by the triangle inequality, we obtain
	\bal
	\|&\nabla_{\Hdag}\bbPhi(X;\ccalH,\bbW)\|\nonumber \\
	&\leq \sum_{g=1}^{F_{l-1}}\bigg\|   \sum_{k=1}^{K-1} T_{\bbW}^{(k)} (\nabla_\Hdag X^g_{l-1})[\bbH_{Lk}]_{gf}\bigg\| \text{.} \nonumber
	%\leq\bigg| \sum_{g_{L-1}=1}^{F_{L-1}}\bbh_{L-1}^{fg_{L-1}}*\bbW \nabla_\hdag X_{L-1}^{g_{L_1}}\bigg|_{L_2}\\
	%&\leq \sum_{g_{L-1}=1}^{F_{L-1}}\bigg|\bbh_{L-1}^{fg_{L-1}}*\bbW \nabla_\hdag X_{L-1}^{g_{L_1}}\bigg|_{L_2}\\
	%&\leq \sum_{g_{L-1}=1}^{F_{L-1}}\bigg|\bbh_{L-1}^{fg_{L-1}}*\bbW\bigg|_{L_2} \bigg|\nabla_\hdag X_{L-1}^{g_{L_1}}\bigg|_{L_2}.
	\eal
	By Assumption \ref{as2}, the convolutional filters are non-amplifying, thus it holds that
	\bal
	\|\nabla_{\Hdag}\bbPhi(X;\ccalH,\bbW)\|\leq&\sum_{g=1}^{F_{l-1}}\bigg\| \nabla_\Hdag X^g_{l-1}\bigg\|\text{.} \nonumber
	\eal
	Now note that as filters are non-amplifying, the maximum difference in the gradient will be attained at the first layer ($l=1$) of the WNN. Also note that the derivative of a convolutional filter $T_\bbH$ at coefficient $k^{\dagger}=i$ is itself a convolutional filter with coefficients $\bbh_i$. The values of $\bbh_i$ are $[\bbh_i]_j=1$ if $j=i$ and $0$ otherwise. Thence, 
	\bal
	\|\nabla_{\Hdag}\bbPhi(X;\ccalH,\bbW)\|
	&\leq  F^{L-1} \bigg\| {\bbh_i}_{*\bbW}X_0 \bigg\| \nonumber\\
	&\leq  F^{L-1} \| X_0 \|.\label{eqn:norm_grad_individual_bound} 
	\eal
	To complete the proof note that tensor $\ccalH$ has $F^{L-1}K$ elements, and each individual gradient is upper bounded by \eqref{eqn:norm_grad_individual_bound}, and $\|X\|$ is normalized by Assumption \ref{as1}.		
\end{proof}

\begin{lemma}\label{lemma1:WNN-WNNn}
    Let $\bbPhi(X;\ccalH,\bbW)$ be a WNN with $F_0=F_L=1$, and $F_l=F$ for $1\leq l \leq L-1$. Let $c \in (0,1]$, and assume that the graphon convolutions in all layers of this WNN have $K$ filter taps [cf. \eqref{eq:graphon_convolution}]. Let $\bbPhi(\bbx_n;\ccalH,\bbS_n)$ be a GNN sampled from $\bbPhi(X;\ccalH,\bbW)$ as in \eqref{eqn:gcn_obtained}. Under Assumptions \ref{as1},\ref{as2},\ref{as3} and \ref{as5}, with probability $1-\xi$ it holds that
    \begin{align}
        \|\bbPhi(X;&\ccalH,\bbW)-\bbPhi(X_n;\ccalH,\bbW_n)\|\nonumber\\
        &\leq LF^{L-1}\bigg(1+\frac{\pi B_{\bbW_n}^c}{\delta_{\bbW\bbW_n}^c} \bigg)\frac{2\big(1+\sqrt{n \log({2n}/{\xi})} \big)}{n}\nonumber\\
        &+\frac{1}{n}+4LF^{L-1}c \text{.}
    \end{align}
The fixed constants $B^c_{\bbW}$ and $\delta^c_{\bbW \bbW_n}$ are the $c$-band cardinality and the $c$-eigenvalue margin of $\bbW$ and $\bbW_n$ respectively [cf. Definitions \ref{def:c_band_cardinality},\ref{def:c_eigenvalue_margin}].
\end{lemma}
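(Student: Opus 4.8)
The plan is to establish this as a transferability bound between the WNN on $\bbW$ and the WNN on the stochastically induced graphon $\bbW_n$, proved by induction over the $L$ layers. At layer $\ell$ the output-feature error decomposes, via the triangle inequality, into an error \emph{propagated} from layer $\ell-1$ and an error \emph{injected} at layer $\ell$. Since by AS\ref{as3} the nonlinearity $\rho$ is normalized Lipschitz and by AS\ref{as2} each convolutional filter is non-amplifying, the propagated error contracts through $\rho$ and each of the $F$ input features contributes additively, yielding a propagation factor of $F$ per layer. The injected error at layer $\ell$ is the perturbation of a single graphon filter, $\|\bbh *_{\bbW} X^g_{\ell-1} - \bbh *_{\bbW_n} X^g_{\ell-1}\|$, acting on features that are bounded because $\rho(0)=0$ and $X,Y$ are normalized (AS\ref{as1}). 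Thus the whole argument reduces to bounding one filter perturbation and then summing its contribution over the $L$ layers and $F^{L-1}$ feature paths, which is where the prefactor $LF^{L-1}$ comes from.

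For the single filter I would split
\begin{align}
\|\bbh *_{\bbW} X - \bbh *_{\bbW_n} X_n\| \leq \|(\bbh *_{\bbW} - \bbh *_{\bbW_n}) X\| + \|\bbh *_{\bbW_n}(X - X_n)\|. \nonumber
\end{align}
The second summand is at most $\|X - X_n\| \leq 1/n$ by the non-amplifying property (AS\ref{as2}) and Proposition \ref{prop:x-x'}, and is the origin of the isolated $+1/n$ term. The first summand is the operator perturbation, handled in the spectral domain by splitting the eigenvalues of $T_{\bbW}$ and $T_{\bbW_n}$ at the threshold $c$. For eigenvalues of magnitude above $c$, I would follow the mechanism of \cite{ruiz2020graphonTransferability}: the eigenvalue displacement is controlled by $\|T_{\bbW}-T_{\bbW_n}\|$, while the eigenfunction misalignment is controlled by a Davis--Kahan argument whose denominator is the eigenvalue margin $\delta_{\bbW\bbW_n}^c$ and whose numerator counts the $B_{\bbW_n}^c$ eigenvalues above the band, producing the factor $(1 + \pi B_{\bbW_n}^c/\delta_{\bbW\bbW_n}^c)$. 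For eigenvalues of magnitude below $c$, I would instead bound the filter response directly using that $\bbh$ is normalized Lipschitz, so the contribution of this tail is $\ccalO(c)$; accumulated over layers and features this becomes the nontransferable term $4LF^{L-1}c$.

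It remains to bound the operator norm $\|T_{\bbW}-T_{\bbW_n}\| = \|\bbW-\bbW_n\|$ that appears in the high-frequency term. I would route this through the deterministic template-induced graphon $\mbW_n$: by the triangle inequality $\|\bbW-\bbW_n\| \leq \|\bbW-\mbW_n\| + \|\mbW_n-\bbW_n\|$. The first piece is at most $2/n$ by Proposition \ref{prop:w-w'}. The second piece is purely stochastic, equal to the spectral deviation of the Bernoulli-sampled adjacency (normalized by $n$) from its mean $\bbW(u_i,u_j)$; a matrix concentration inequality as in \cite{chung2011spectra}, valid under the size condition AS\ref{as5}, gives $\|\mbW_n-\bbW_n\| = \ccalO(\sqrt{n\log(2n/\xi)}/n)$ with probability $1-\xi$. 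Combining the two pieces yields the displayed factor $2(1+\sqrt{n\log(2n/\xi)})/n$, and the failure probability $\xi$ is inherited by the whole bound.

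The main obstacle is the eigenfunction-perturbation estimate above the threshold $c$: because the graphon spectrum accumulates at zero, one cannot uniformly align eigenfunctions, and the Davis--Kahan step must be applied only on the finite band of eigenvalues exceeding $c$, carefully tracking that the margin $\delta_{\bbW\bbW_n}^c$ stays bounded away from zero and that the $B_{\bbW_n}^c$ out-of-band eigenfunctions contribute the $\pi$ factor. The remaining work---summing the per-layer injected errors and verifying that the propagation factors compose into exactly $LF^{L-1}$---is bookkeeping once the single-filter estimate is in place.
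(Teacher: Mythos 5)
Your proposal is correct and is in substance the same argument the paper relies on: the paper's proof of this lemma is a one-line invocation of \cite[Theorem 3]{ruiz2021transferability} with $A_h=1$, $A_w=1$, $\|X\|=1$ (Assumptions \ref{as1}--\ref{as2}) and concentration rate $\beta(n,\xi)=\sqrt{n\log(2n/\xi)}$, and the proof of that cited theorem proceeds exactly as you sketch---layer-wise induction with non-amplifying filters and normalized Lipschitz nonlinearities, a spectral split at the threshold $c$ with a Davis--Kahan-type eigenfunction bound producing the factor $\left(1+\pi B_{\bbW_n}^c/\delta_{\bbW\bbW_n}^c\right)$, and a triangle inequality through the template graphon $\mbW_n$ combined with Chung--Radcliffe concentration \cite{chung2011spectra} for the stochastic part. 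Since you reconstruct rather than cite that machinery, the difference is presentational, not mathematical.
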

\begin{proof}
The proof can be found in \cite[Theorem $3$]{ruiz2021transferability} where $A_h=1$, $A_w=1$, and $\|X\|=1$ by Assumptions \ref{as1}, \ref{as2}; and $\alpha(n,\chi)=1$ and $\beta(n,\chi_3)=\sqrt{n \log({2n}/{\xi})}$.
\end{proof}

\begin{lemma}\label{lemma:norm_bound_dif_grad_WNN}
	Let $\bbPhi(X;\ccalH,\bbW)$ be a WNN with $F_0=F_L=1$, and $F_l=F$ for $1\leq l \leq L-1$. Let $c \in (0,1]$, and assume that the graphon convolutions in all layers of this WNN have $K$ filter taps [cf. \eqref{eq:graphon_convolution}]. Let $\bbPhi(\bbx_n;\ccalH,\bbS_n)$ be a GNN sampled from $\bbPhi(X;\ccalH,\bbW)$ as in \eqref{eqn:gcn_obtained}. Under Assumptions \ref{as1},\ref{as2},\ref{as3} and \ref{as5}, with probability $1-\xi$ it holds that
	\bal
	&\|\nabla_{\ccalH}\bbPhi(X;\ccalH,\bbW)-\nabla_{\ccalH}\bbPhi(X_n;\ccalH,\bbW_n)\|\\
	&\leq  \sqrt{KF^{L-1}}\bigg(\frac{2 F^{L-1}L}{n}+8L^2F^{2L-2}c \nonumber\\
    &+2 L^2F^{2L-2}\bigg(1+\frac{\pi B_{\bbW_n}^c}{\delta_{\bbW\bbW_n}^c} \bigg)\frac{2\big(1+\sqrt{n \log({2n}/{\xi})} \big)}{n}\bigg). \nonumber
	\eal
\end{lemma}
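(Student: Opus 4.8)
The plan is to prove a bound on a single tensor entry and then aggregate. First I would fix a generic coefficient $\Hdag$ of $\ccalH$ (with layer index $l^\dagger$) and write $\nabla_{\Hdag}\bbPhi$ explicitly through the chain rule, exactly as in the proof of Lemma \ref{lemma:norm_grad_on_wnn_bound}. The key structural observation is that $\nabla_{\Hdag}\bbPhi$ is a product of per-layer Jacobians $J_l = \diag(\nabla\rho(\cdot))\,T_{\bbH_l}$ for $l>l^\dagger$, applied to the ``selector'' filter $\bbh_{k^\dagger}$ acting on the layer-$(l^\dagger-1)$ feature; equivalently, the gradient has the same compositional ``WNN-like'' form (graphon convolutions interleaved with pointwise maps) as a forward pass. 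This is what lets me reuse the transferability machinery of Lemma \ref{lemma1:WNN-WNNn} at the level of gradients. Since $\ccalH$ has $KF^{L-1}$ entries, a per-entry bound $B$ yields $\|\nabla_\ccalH(\cdot)\|\le \sqrt{KF^{L-1}}\,B$, which supplies the prefactor in the statement.

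Next I would bound $\|\nabla_{\Hdag}\bbPhi(X;\ccalH,\bbW)-\nabla_{\Hdag}\bbPhi(X_n;\ccalH,\bbW_n)\|$ by a telescoping decomposition of the product of Jacobians, $\|\prod_l A_l-\prod_l B_l\|\le\sum_l\big(\prod_{m<l}\|A_m\|\big)\|A_l-B_l\|\big(\prod_{m>l}\|B_m\|\big)$, where $A_l$ and $B_l$ are the layer-$l$ factors evaluated on $(\bbW,X)$ and $(\bbW_n,X_n)$ respectively. For the norm of each factor I use AS\ref{as2} (non-amplifying filters) and AS\ref{as3} ($\nabla\rho$ normalized Lipschitz), which bounds each $\|A_l\|,\|B_l\|$ by a constant and collects into the $F^{2L-2}$ powers after summing over $F$ features and $L$ layers. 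Each factor difference $\|A_l-B_l\|$ splits into two contributions: the graphon-convolution perturbation $T_{\bbW}^{(k)}\to T_{\bbW_n}^{(k)}$, and the mismatch in the argument of $\nabla\rho$, i.e. $\nabla\rho$ evaluated at $X_l$ versus $X_{n,l}$.

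The first contribution is controlled exactly as in Lemma \ref{lemma1:WNN-WNNn}: decomposing the shift operators in their eigenbases and splitting the spectrum at the threshold $c$ gives the nontransferable term proportional to $c$ (variation of the filter response below $c$) together with the spectral term carrying the factor $\big(1+\pi B_{\bbW_n}^c/\delta_{\bbW\bbW_n}^c\big)\,\frac{2(1+\sqrt{n\log(2n/\xi)})}{n}$, valid with probability $1-\xi$ under AS\ref{as5}. The second contribution is handled by the Lipschitzness of $\nabla\rho$ (AS\ref{as3}) together with the feature-difference bound $\|X_l-X_{n,l}\|$, which is itself of the order supplied by Lemma \ref{lemma1:WNN-WNNn} and Propositions \ref{prop:x-x'}, \ref{prop:w-w'} (the $1/n$ input term). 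Summing the telescope over the $L$ layers, and noting that the feature differences themselves grow with depth, produces the extra factor $LF^{L-1}$ relative to the forward bound of Lemma \ref{lemma1:WNN-WNNn} and hence the $L^2F^{2L-2}$ coefficients; collecting the signal-approximation, spectral, and nontransferable pieces gives the three summands in the claimed bound.

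I expect the main obstacle to be controlling the difference of the product of per-layer Jacobians without losing powers of $F$ and $L$: the telescoping must keep the norm of every surviving factor at its non-amplifying value, and the single hardest estimate inside it is the graphon-convolution perturbation $\|(T_{\bbW}^{(k)}-T_{\bbW_n}^{(k)})Z\|$, which is precisely the spectral argument where $B_{\bbW_n}^c$, $\delta_{\bbW\bbW_n}^c$ and the nontransferable constant $c$ enter. A secondary subtlety is that $\nabla\rho$ is evaluated at different points in the two networks, so its Lipschitz continuity must be combined with the recursively propagated feature-difference bound to close the estimate.
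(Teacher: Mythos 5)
Your proposal is correct and follows essentially the same route as the paper's proof: an entry-wise bound on each $\Hdag$ aggregated over the $KF^{L-1}$ entries of $\ccalH$, the observation that $\nabla_{\Hdag}\bbPhi$ is a selector filter $\bbh_i$ composed with per-layer Jacobians, and per-layer differences split into the graphon-operator perturbation (the spectral argument at threshold $c$ carrying $B_{\bbW_n}^c$ and $\delta_{\bbW\bbW_n}^c$) and the $\nabla\rho$ argument mismatch (handled by AS\ref{as3} plus the forward-pass bound of Lemma \ref{lemma1:WNN-WNNn}). Your explicit telescoping of the Jacobian product is simply the unrolled form of the paper's layer-by-layer recursion, which peels off one layer at a time, applies the same splitting $L-2$ times, and closes at the first layer using Propositions \ref{prop:x-x'} and \ref{prop:w-w'} together with the concentration bound of \cite{chung2011spectra}.
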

\begin{proof}
	We will first show that the gradient with respect to any arbitrary element $\Hdag \in \reals$ of $\ccalH$ can be uniformly bounded. Note that the maximum is attained if $l^\dagger=1$. Without loss of generality, assuming $l^\dagger>l-1$, we can begin by using the output of the WNN in \eqref{eqn:wcn_layer} to write
	\bal
	\begin{split}
	\|&\nabla_{\Hdag}\bbPhi(X;\ccalH,\bbW)-\nabla_{\Hdag}\bbPhi(X_n;\ccalH,\bbW_n)\|\\\
	&\quad\quad=\|\nabla_{\Hdag}X_{L}^f-\nabla_{\Hdag}X_{nL}^f\|\\
	&\quad\quad=\bigg\|\nabla_\Hdag \rho\left(\sum_{g=1}^{F_{l-1}} \sum_{k=1}^{K-1} (T_{\bbW}^{(k)} X^g_{l-1})[\bbH_{lk}]_{gf} \right)\\
	&\quad\quad-\nabla_\Hdag \rho\left(\sum_{g=1}^{F_{l-1}} \sum_{k=1}^{K-1} (T_{\bbW_n}^{(k)} X^g_{n l-1})[\bbH_{lk}]_{gf} \right)\bigg\|.
	\end{split}
	\label{eq:norm_bound_dif__grad_WNN_first_equality}
	\eal
	
	Applying the chain rule and using the triangle inequality, we get
	\bal
	\|&\nabla_{\Hdag}X_{L}^f-\nabla_{\Hdag}X_{nL}^f\|\nonumber\\
	&\quad\quad\leq \bigg\|\bigg(\nabla \rho \bigg(\sum_{g=1}^{F_{l-1}} \sum_{k=1}^{K-1} (T_{\bbW}^{(k)} X^g_{ l-1})[\bbH_{lk}]_{gf}\bigg)\nonumber\\
	&\quad\quad-\nabla \rho \bigg(\sum_{g=1}^{F_{l-1}} \sum_{k=1}^{K-1} (T_{\bbW_n}^{(k)} X^g_{n l-1})[\bbH_{lk}]_{gf}\bigg)\bigg)\nonumber\\	
	&\quad\quad\times\nabla_{\Hdag}\bigg(\sum_{g=1}^{F_{l-1}} \sum_{k=1}^{K-1} (T_{\bbW}^{(k)} X^g_{ l-1})[\bbH_{lk}]_{gf}\bigg)\bigg\|\nonumber\\	
	&\quad\quad+\bigg\|\nabla \rho \bigg(\sum_{g=1}^{F_{l-1}} \sum_{k=1}^{K-1} (T_{\bbW_n}^{(k)} X^g_{n l-1})[\bbH_{lk}]_{gf}\bigg)\nonumber\\	
	&\quad\quad\times\bigg({\tiny\nabla_\Hdag}  \sum_{g=1}^{F_{l-1}} \sum_{k=1}^{K-1} (T_{\bbW}^{(k)} X^g_{ l-1})[\bbH_{lk}]_{gf}\nonumber\\
	&\quad\quad-{\tiny\nabla_\Hdag}   \sum_{g=1}^{F_{l-1}} \sum_{k=1}^{K-1} (T_{\bbW_n}^{(k)} X^g_{n l-1})[\bbH_{lk}]_{gf}\bigg)\bigg\|\nonumber.
	\eal
	
	Next, we use Cauchy-Schwartz inequality, Assumptions \ref{as3}, \ref{as4}, and Proposition \ref{lemma:norm_grad_on_wnn_bound} to bound the terms corresponding to the gradient of the nonlinearity $\rho$, the loss function $\ell$, and the WNN respectively. Explicitly,
	\bal
	\|&\nabla_{\Hdag}X_{L}^f-\nabla_{\Hdag}X_{nL}^f\|\label{eqn:prop_norm_bound_dif__grad_WNN_divided_eqns}\\
	&\quad\quad\quad\leq \bigg\|\sum_{g=1}^{F_{l-1}} \sum_{k=1}^{K-1} (T_{\bbW}^{(k)} X^g_{ l-1})[\bbH_{lk}]_{gf}\nonumber\\
	&\quad\quad\quad-\sum_{g=1}^{F_{l-1}} \sum_{k=1}^{K-1} (T_{\bbW_n}^{(k)} X^g_{n l-1})[\bbH_{lk}]_{gf}\bigg\|F^{L-1}\|X_0\|\nonumber\\	
	&\quad\quad\quad+\bigg\|\sum_{g=1}^{F_{l-1}} \nabla_\Hdag \sum_{k=1}^{K-1}  \bigg(  (T_{\bbW}^{(k)} X^g_{ l-1})[\bbH_{lk}]_{gf}\nonumber\\
	&\quad\quad\quad-(T_{\bbW_n}^{(k)} X^g_{n l-1})[\bbH_{lk}]_{gf})\bigg)\bigg\|\nonumber .
	\eal
    Applying triangle inequality to the second term, we get
	\bal
	\|&\nabla_{\Hdag}X_{L}^f-\nabla_{\Hdag}X_{nL}^f\|\nonumber\\
	&\quad\leq \bigg\|\sum_{g=1}^{F_{l-1}} \sum_{k=1}^{K-1} (T_{\bbW}^{(k)} X^g_{ l-1})[\bbH_{lk}]_{gf}\nonumber\\	
	&\quad-\sum_{g=1}^{F_{l-1}} \sum_{k=1}^{K-1} (T_{\bbW_n}^{(k)} X^g_{n l-1})[\bbH_{lk}]_{gf}\bigg\| F^{L-1}\|X_0\|\nonumber\\	
	&\quad+\bigg\|\sum_{g=1}^{F_{l-1}} {\tiny\nabla_\Hdag \sum_{k=1}^{K-1}  \bigg(  (T_{\bbW}^{(k)})} [\bbH_{lk}]_{gf}\nonumber\\	
	&\quad-(T_{\bbW_n}^{(k)} )[\bbH_{lk}]_{gf})\bigg)X^g_{n l-1} \bigg\|\label{eqn:norm_bound_dif_grad_WNN_long_triangle_inequality} \\
	&\quad+\sum_{g=1}^{F_{l-1}}\bigg\| \nabla_\Hdag \sum_{k=1}^{K-1}   T_{\bbWn}^{(k)}  \bigg(X^g_{ l-1}- X^g_{n l-1}\bigg)[\bbH_{lk}]_{gf})\bigg\|\nonumber .
	\eal
	Now note that as we consider the case in which $l_\dagger < l-1$, using the Cauchy-Schwartz inequality we can use the same bound for the first and second terms of the right hand side of \eqref{eqn:norm_bound_dif_grad_WNN_long_triangle_inequality}. Also note that, by Assumption \ref{as3}, the filters are non-expansive, which allows us to write
	\bal
	\|&\nabla_{\Hdag}X_{L}^f-\nabla_{\Hdag}X_{nL}^f\|\nonumber\\
	&\quad\quad\leq 2\bigg\|\sum_{g=1}^{F_{l-1}} \sum_{k=1}^{K-1} (T_{\bbW}^{(k)} X^g_{ l-1})[\bbH_{lk}]_{gf}\nonumber\\	
	&\quad\quad-\sum_{g=1}^{F_{l-1}} \sum_{k=1}^{K-1} (T_{\bbW_n}^{(k)} X^g_{n l-1})[\bbH_{lk}]_{gf}\bigg\| F^{L-1}\|X_0\|\nonumber\\	
	&\quad\quad+\sum_{g=1}^{F_{l-1}}\bigg\| \nabla_\Hdag   \bigg(X^g_{ l-1}- X^g_{n l-1}\bigg)\bigg\|\nonumber .
	\eal
	The only term that remains to bound has the exact same bound derived in \eqref{eq:norm_bound_dif__grad_WNN_first_equality}, but on the previous layer $L-2$. Hence, by applying the same steps $L-2$ times, 
	%as the WNN has $L$ layers, 
	we can obtain a bound for any element $\Hdag$ of tensor $\ccalH$. 
    \bal\label{eqn:solved_recursion_in_lemma_2}
	\|&\nabla_{\Hdag}X_{L}^f-\nabla_{\Hdag}X_{nL}^f\|\\
	&\quad\quad\leq 2 L F^{L-2} \bigg\|\sum_{g=1}^{F_{l-1}} \sum_{k=1}^{K-1} (T_{\bbW}^{(k)} X^g_{ l-1})[\bbH_{lk}]_{gf}\nonumber\\	
	&\quad\quad-\sum_{g=1}^{F_{l-1}} \sum_{k=1}^{K-1} (T_{\bbW_n}^{(k)} X^g_{n l-1})[\bbH_{lk}]_{gf}\bigg\| F^{L-1}\|X_0\|\nonumber\\
	&\quad\quad+\sum_{g=1}^{F_{l-1}}\bigg\| \nabla_\Hdag   \bigg(X^g_{ 1}- X^g_{1}\bigg)\bigg\|\nonumber .
	\eal
	Note that the derivative of a convolutional filter $T_\bbH$ at coefficient $k^{\dagger}=i$ is itself a convolutional filter with coefficients $\bbh_i$ [cf. Definition \ref{eq:graphon_convolution}]. The values of $\bbh_i$ are $[\bbh_i]_j=1$ if $j=i$ and $0$ otherwise. Additionally, this $\bbh_i$ is itself a filter that verifies Assumption \ref{as2}, as graphons are normalized. Thus, considering that $l^\dagger =0$, and using Propositions \ref{prop:x-x'}, \ref{prop:w-w'}, \cite[Theorem 1]{chung2011spectra} together with the triangle inequality, we obtain
	\bal\label{eqn:filter_at_1_in_lemma_2}
	\bigg\|&{\bbh_i}_{*\bbW_n}X_{n0}- {\bbh_i}_{*\bbW}X_0 \bigg\|\\	
	&\leq\bigg(\| \bbW-\mbW_n\|+\| \mbW_n-\bbW_n\|\bigg)\|X_0 \|+\|X_{n0}-X_0 \|\nonumber\\
	&\leq\bigg(1+\frac{\pi B_{\bbW_n}^c}{\delta_{\bbW\bbW_n}^c} \bigg)\frac{2\big(1+\sqrt{n \log({2n}/{\xi})} \big)}{n}+\frac{1}{n}\nonumber
	\eal
	with probability $1-\xi$, where $\mbW_n$ is the template graphon. Now, substituting \eqref{eqn:solved_recursion_in_lemma_2} into \eqref{eqn:filter_at_1_in_lemma_2}, and using Lemma \ref{lemma1:WNN-WNNn}, with probability $1-\xi$, it holds that
	\bal\label{lemma2:last}
	\|&\nabla_{\Hdag}X_{L}^f-\nabla_{\Hdag}X_{nL}^f\|\nonumber\\	
	&\quad\leq 2 L^2F^{2L-2}\bigg(1+\frac{\pi B_{\bbW_n}^c}{\delta_{\bbW\bbW_n}^c} \bigg)\frac{2\big(1+\sqrt{n \log({2n}/{\xi})} \big)}{n}\nonumber\\
    &\quad+\frac{2 F^{L-1}L}{n}+8L^2F^{2L-2}c .\nonumber
	\eal
	To achieve the final result, note that tensor $\ccalH$ has $KF^{L-1}$ elements, and each element is upper bounded by \eqref{lemma2:last}.
	\end{proof}
	
	\begin{lemma}\label{lemma:norm_bound_dif_loss_WNN}
	Let $\bbPhi(X;\ccalH,\bbW)$ be a WNN with $F_0=F_L=1$, and $F_l=F$ for $1\leq l \leq L-1$. Let $c \in (0,1]$, and assume that the graphon convolutions in all layers of this WNN have $K$ filter taps [cf. \eqref{eq:graphon_convolution}]. Let $\bbPhi(\bbx_n;\ccalH,\bbS_n)$ be a GNN sampled from $\bbPhi(X;\ccalH,\bbW)$ as in \eqref{eqn:gcn_obtained}. Under Assumptions \ref{as1}--\ref{as5}, with probability $1-\xi$ it holds that
	\bal
	\|&\nabla_{\ccalH}\ell(Y,\bbPhi(X;\ccalH,\bbW))-\nabla_{\ccalH}\ell(Y_n,\bbPhi(X_n;\ccalH,\bbW_n))\|\\
	&\quad\leq \sqrt{KF^{L-1}}\bigg(\frac{4 F^{L-1}L}{n}+12L^2F^{2L-2}c\nonumber\\
    &\quad+3 L^2F^{2L-2}\bigg(1+\frac{\pi B_{\bbW_n}^c}{\delta_{\bbW\bbW_n}^c} \bigg)\frac{2\big(1+\sqrt{n \log({2n}/{\xi})} \big)}{n}\bigg) .\nonumber 
    \eal
\end{lemma}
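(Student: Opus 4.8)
The plan is to reduce this loss-gradient difference to quantities already controlled in the preceding lemmas, using the chain rule and an add-and-subtract decomposition. By the chain rule, $\nabla_{\ccalH}\ell(Y,\bbPhi(X;\ccalH,\bbW)) = \ell'(Y,\bbPhi(X;\ccalH,\bbW))\,\nabla_{\ccalH}\bbPhi(X;\ccalH,\bbW)$, where $\ell'$ denotes the derivative of $\ell$ with respect to its second (prediction) argument, and analogously for the induced WNN. Abbreviating $a = \ell'(Y,\bbPhi(X;\ccalH,\bbW))$, $a_n = \ell'(Y_n,\bbPhi(X_n;\ccalH,\bbW_n))$, $b = \nabla_{\ccalH}\bbPhi(X;\ccalH,\bbW)$, and $b_n = \nabla_{\ccalH}\bbPhi(X_n;\ccalH,\bbW_n)$, I would insert the cross term $a\,b_n$ and apply the triangle inequality together with Cauchy--Schwarz (treating $\ell'$ as a scalar multiplier acting on the tensor-valued network gradient) to obtain
\begin{align}
\|ab - a_n b_n\| \leq |a|\,\|b-b_n\| + |a-a_n|\,\|b_n\|. \nonumber
\end{align}

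The four factors are then bounded separately by the earlier results. For the first summand, Assumption \ref{as4} makes $\ell$ normalized Lipschitz, so $|a| \leq 1$, while $\|b-b_n\|$ is \emph{exactly} the gradient-perturbation quantity bounded in Lemma \ref{lemma:norm_bound_dif_grad_WNN}. For the second summand, $\|b_n\|$ is bounded by Lemma \ref{lemma:norm_grad_on_wnn_bound} (whose bound is stated for $\bbW$ but applies verbatim to the induced graphon $\bbW_n$). The remaining factor $|a-a_n|$ is controlled by the assumed normalized Lipschitz continuity of the loss \emph{gradient} (again Assumption \ref{as4}), which yields $|a-a_n| \leq \|Y-Y_n\| + \|\bbPhi(X;\ccalH,\bbW)-\bbPhi(X_n;\ccalH,\bbW_n)\|$; here $\|Y-Y_n\| \leq 1/n$ by Proposition \ref{prop:x-x'} applied to the output signal $Y$, and the second term is precisely the output-perturbation bound of Lemma \ref{lemma1:WNN-WNNn}.

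Combining these pieces, I would multiply out the two summands, account for the $KF^{L-1}$ entries of $\ccalH$ exactly as in the closing step of Lemma \ref{lemma:norm_bound_dif_grad_WNN}, and then collect the three structural types of terms: the $\ccalO(1/n)$ discretization term, the $c$-dependent nontransferable term, and the spectral term carrying $1+\pi B_{\bbW_n}^c/\delta_{\bbW\bbW_n}^c$ and the factor $2(1+\sqrt{n\log(2n/\xi)})/n$. The contribution $|a|\,\|b-b_n\|$ reproduces the bound of Lemma \ref{lemma:norm_bound_dif_grad_WNN}, and the contribution $|a-a_n|\,\|b_n\|$ adds the surplus coefficients that upgrade $2F^{L-1}L \to 4F^{L-1}L$, $8L^2F^{2L-2} \to 12L^2F^{2L-2}$, and $2L^2F^{2L-2} \to 3L^2F^{2L-2}$ in the three terms. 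Since Lemmas \ref{lemma1:WNN-WNNn} and \ref{lemma:norm_bound_dif_grad_WNN} hold on the \emph{same} high-probability event (the spectral concentration of the sampled adjacency matrix underlying both), the final bound holds with probability $1-\xi$ with no union-bound penalty.

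I expect the only real friction to be the constant bookkeeping---carrying coefficients from three distinct lemmas through the product $ab-a_nb_n$ and verifying they aggregate into exactly the stated $4$, $12$, and $3$ prefactors---together with the care needed to treat $\ell'$ as a scalar (or $L^2$) multiplier so that the Cauchy--Schwarz split is rigorous. Conceptually the work is entirely front-loaded in the output-perturbation (Lemma \ref{lemma1:WNN-WNNn}) and gradient-perturbation (Lemma \ref{lemma:norm_bound_dif_grad_WNN}) estimates; this lemma is simply the step that glues them together through the loss via the chain rule.
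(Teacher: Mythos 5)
Your proposal is correct and follows essentially the same route as the paper's proof: chain rule, an add-and-subtract cross-term split via triangle and Cauchy--Schwarz, bounding the resulting four factors with Assumption \ref{as4}, Lemma \ref{lemma:norm_grad_on_wnn_bound}, Lemma \ref{lemma1:WNN-WNNn} together with Proposition \ref{prop:x-x'}, and Lemma \ref{lemma:norm_bound_dif_grad_WNN}, and then scaling by the $KF^{L-1}$ entries of $\ccalH$. The only cosmetic difference is which cross term you insert ($a\,b_n$ rather than the paper's $a_n b$), which merely swaps where the unit bound on the loss derivative and the $F^{L-1}$ per-element bound on the network gradient are applied, and yields the identical constant bookkeeping.
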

\begin{proof}
	In order to analyze the norm of the gradient with respect to the tensor $\ccalH$, we start by taking the derivative with respect to a single element of the tensor, $\Hdag$. Using the chain rule to compute the gradient of the loss function $\ell$, we get
	\bal
	\|&\nabla_{\Hdag}(\ell(Y,\bbPhi(X;\ccalH,\bbW))-\ell(Y_n,\bbPhi(X_n;\ccalH,\bbW_n)))\|\nonumber\\
	&\quad=\|\nabla \ell(Y,\bbPhi(X;\ccalH,\bbW))\nabla_{\Hdag}\bbPhi(X;\ccalH,\bbW)\nonumber\\
	&\quad-\nabla\ell(Y_n,\bbPhi(X_n;\ccalH,\bbW_n))\nabla_{\Hdag}\bbPhi(X_n;\ccalH,\bbW_n)\|\nonumber
	\eal
	and by Cauchy-Schwartz and the triangle inequality, it holds
	\bal
	\|&\nabla_{\Hdag}(\ell(Y,\bbPhi(X;\ccalH,\bbW))-\ell(Y_n,\bbPhi(X_n;\ccalH,\bbW_n)))\|\nonumber\\
	&\quad\leq\|\nabla \ell(Y,\bbPhi(X;\ccalH,\bbW))-\nabla\ell(Y_n,\bbPhi(X_n;\ccalH,\bbW_n))\|\nonumber\\
	&\quad\|\nabla_{\Hdag}\bbPhi(X;\ccalH,\bbW)\|+\|\nabla\ell(Y_n,\bbPhi(X_n;\ccalH,\bbW_n))\|\nonumber\\ &\quad\|\nabla_{\Hdag}\bbPhi(X;\ccalH,\bbW)-\nabla_{\Hdag}\bbPhi(X_n;\ccalH,\bbW_n)\|\nonumber.
	\eal
	By the triangle inequality and Assumption \ref{as4}, it follows
	\bal
	\|&\nabla_{\Hdag}(\ell(Y,\bbPhi(X;\ccalH,\bbW))-\ell(Y_n,\bbPhi(X_n;\ccalH,\bbW_n)))\|\nonumber\\
	&\quad\quad\leq\|\nabla \ell(Y,\bbPhi(X;\ccalH,\bbW))-\nabla\ell(Y,\bbPhi(X_n;\ccalH,\bbW_n))\|\nonumber\\ &\quad\quad\times\|\nabla_{\Hdag}\bbPhi(X;\ccalH,\bbW)\|\nonumber\\	
	&\quad\quad\times\|\nabla \ell(Y_n,\bbPhi(X_n;\ccalH,\bbW_n))-\nabla\ell(Y,\bbPhi(X_n;\ccalH,\bbW_n))\|\nonumber\\ 
	&\quad\quad\times\|\nabla_{\Hdag}\bbPhi(X;\ccalH,\bbW)\|\nonumber\\	
	&\quad\quad+\|\nabla_{\Hdag}(\bbPhi(X;\ccalH,\bbW)-\bbPhi(X_n;\ccalH,\bbW_n))\|\nonumber\\
	&\quad\quad\leq(\|Y_n-Y \|+\|\bbPhi(X_n;\ccalH,\bbW_n))-\bbPhi(X;\ccalH,\bbW)) \|)\nonumber\\ &\quad\quad\times\|\nabla_{\Hdag}\bbPhi(X;\ccalH,\bbW)\|\nonumber\\	
	&\quad\quad+ \|\nabla_{\Hdag}(\bbPhi(X;\ccalH,\bbW)-\bbPhi(X_n;\ccalH,\bbW_n))\|\nonumber.
	\eal
	
	Next, we can use Lemmas \ref{lemma:norm_grad_on_wnn_bound}, \ref{lemma1:WNN-WNNn}, and \ref{lemma:norm_bound_dif_grad_WNN}, Proposition \ref{prop:x-x'}, and Assumption \ref{as1} to obtain 	
	\bal\label{eq:last_lemma3}
	\|&\nabla_{\Hdag}(\ell(Y,\bbPhi(X;\ccalH,\bbW))-\ell(Y_n,\bbPhi(X_n;\ccalH,\bbW_n)))\|\nonumber\\
	&\quad\quad\quad\leq \bigg(3 L^2F^{2L-2}\bigg(1+\frac{\pi B_{\bbW_n}^c}{\delta_{\bbW\bbW_n}^c} \bigg)\frac{2\bigg(1+\sqrt{n \log(\frac{2n}{\xi})} \bigg)}{n}\nonumber\\
    &\quad\quad\quad+\frac{4 F^{L-1}L}{n}+12L^2F^{2L-2}c\bigg).
	\eal
	
	Noting that tensor $\ccalH$ has $KF^{L-1}$ elements, and that each individual term can be bounded by \eqref{eq:last_lemma3}, we arrive at the desired result.
\end{proof}	

\begin{definition}\label{def:Constant_Of_GNN}
We define the constant $\gamma$ as
\begin{align}
\gamma =12  \sqrt{K F^{L-1}}  L^2 F^{2L-2}
\end{align}
where $F$ is the number of features, $L$ is the number of layers, and $K$ is the number of filter taps of the GNN.
\end{definition}
\begin{proof}[Proof of Theorem \ref{thm:grad_Approximation}]
	To start, consider the event $A_n$ such that, 
	\bal
	A_n=&\bigg(\|\nabla_{\ccalH}(\ell(Y,\bbPhi(X;\ccalH,\bbW))-\ell(Y_n,\bbPhi(X_n;\ccalH,\bbW_n)))\|\nonumber\\
	&\leq\sqrt{KF^{L-1}}\bigg(\frac{4 F^{L-1}L}{n}+12L^2F^{2L-2}c\nonumber\\
	&+3 L^2F^{2L-2}\bigg(1+\frac{\pi B_{\bbW_n}^c}{\delta_{\bbW\bbW_n}^c} \bigg)\frac{2\big(1+\sqrt{n \log({2n}/{\xi})} \big)}{n}\bigg)\bigg).\nonumber
	\eal
	Taking the disjoint events $A_n$ and $A_n^c$, and denoting the indicator function $\bbone(\cdot)$, we split the expectation as
	\bal
	&\mbE[\|\nabla_{\ccalH}\ell(Y,\bbPhi(X;\ccalH,\bbW))-\nabla_{\ccalH}\ell(Y_n,\bbPhi(X_n;\ccalH,\bbW_n))\|]\nonumber\\
	&\quad\quad\quad\quad\quad\quad\quad=\mbE[\|\nabla_{\ccalH}(\ell(Y,\bbPhi(X;\ccalH,\bbW))\nonumber\\
	&\quad\quad\quad\quad\quad\quad\quad-\ell(Y_n,\bbPhi(X_n;\ccalH,\bbW_n)))\|\bbone(A_n)]\nonumber\\
	&\quad\quad\quad\quad\quad\quad\quad+\mbE[\|\nabla_{\ccalH}\ell(Y,\bbPhi(X;\ccalH,\bbW))\nonumber\\
	&\quad\quad\quad\quad\quad\quad\quad-\nabla_{\ccalH}\ell(Y_n,\bbPhi(X_n;\ccalH,\bbW_n))\|\bbone(A_n^c)]\label{eqn:Theo1_expectation_split}
	\eal
    We can then bound the term corresponding to $A_n^c$ using the chain rule, the Cauchy-Schwartz inequality, Assumption \ref{as4}, and Lemma \ref{lemma:norm_grad_on_wnn_bound} as follows 
	\begin{align}
	&\|\nabla_{\ccalH}\ell(Y,\bbPhi(X;\ccalH,\bbW))-\nabla_{\ccalH}\ell(Y_n,\bbPhi(X_n;\ccalH,\bbW_n))\|\nonumber\\
	&\quad\leq \|\nabla_{\ccalH}\ell(Y,\bbPhi(X;\ccalH,\bbW))\|+\|\nabla_{\ccalH}\ell(Y_n,\bbPhi(X_n;\ccalH,\bbW_n))\|\nonumber\\
	&\quad\leq \|\nabla\ell(Y,\bbPhi(X;\ccalH,\bbW))\| \|\nabla_{\ccalH}\bbPhi(X;\ccalH,\bbW)\|\nonumber\\
	&\quad+\|\nabla\ell(Y_n,\bbPhi(X_n;\ccalH,\bbW_n))\|\|\nabla_{\ccalH}\bbPhi(X_n;\ccalH,\bbW_n)\|\nonumber\\
	&\quad\leq \|\nabla_{\ccalH}\bbPhi(X;\ccalH,\bbW)\|+\|\nabla_{\ccalH}\bbPhi(X_n;\ccalH,\bbW_n)\|\nonumber\\
	&\quad\leq 2 F^{2L} \sqrt{K} . \label{eqn:Theo1_bound_A_c}
	\end{align}
	Going back to \eqref{eqn:Theo1_expectation_split}, we can substitute the bound obtained in \eqref{eqn:Theo1_bound_A_c}, take $P(A_n)=1-\xi$, and use Lemma \ref{lemma:norm_bound_dif_loss_WNN} to get
	\bal
	&\mbE[\|\nabla_{\ccalH}\ell(Y,\bbPhi(X;\ccalH,\bbW))-\nabla_{\ccalH}\ell(Y_n,\bbPhi(X_n;\ccalH,\bbW_n))\|]\nonumber\\
	&\leq \xi 2 F^{2L} \sqrt{K} +(1-\xi)\sqrt{KF^{L-1}}\bigg(\frac{4 F^{L-1}L}{n}+12L^2F^{2L-2}c\nonumber\\
	&+3 L^2F^{2L-2}\bigg(1+\frac{\pi B_{\bbW_n}^c}{\delta_{\bbW\bbW_n}^c} \bigg)\frac{2\big(1+\sqrt{n \log({2n}/{\xi})} \big)}{n}\bigg). \nonumber
	\eal
	Setting $\xi=\frac{1}{\sqrt{n}}$ completes the proof. 
\end{proof}
%%%%%%%%%%%%%%%%%%%%%%%%%%%%%%%%%%%%%%%%%%%%%%%%%%%%%%%%%%%%%%%%%%%%%%%%%%%%%%%%%%%%%%%%%%%%%%%%%%%%%%%%%%
%%%%%%%%%%%%%%%%%%%%%%%%%%%%%%%%%%%%%%%%%%%%%%%%%%%%%%%%%%%%%%%%%%%%%%%%%%%%%%%%%%%%%%%%%%%%%%%%%%%%%%%%%%
%%%%%%%%%%%%%%%%%%%%%%%%%%%%%%%%%%%%%%%%%%%%%%%%%%%%%%%%%%%%%%%%%%%%%%%%%%%%%%%%%%%%%%%%%%%%%%%%%%%%%%%%%%
%%%%%%%%%%%%%%%%%%%%%%%%%%%%%%%%%%%%%%%%%%%%%%%%%%%%%%%%%%%%%%%%%%%%%%%%%%%%%%%%%%%%%%%%%%%%%%%%%%%%%%%%%%
%%%%%
%%%%%   APPENDIX B
%%%%%
%%%%%%%%%%%%%%%%%%%%%%%%%%%%%%%%%%%%%%%%%%%%%%%%%%%%%%%%%%%%%%%%%%%%%%%%%%%%%%%%%%%%%%%%%%%%%%%%%%%%%%%%%%
%%%%%%%%%%%%%%%%%%%%%%%%%%%%%%%%%%%%%%%%%%%%%%%%%%%%%%%%%%%%%%%%%%%%%%%%%%%%%%%%%%%%%%%%%%%%%%%%%%%%%%%%%%
%%%%%%%%%%%%%%%%%%%%%%%%%%%%%%%%%%%%%%%%%%%%%%%%%%%%%%%%%%%%%%%%%%%%%%%%%%%%%%%%%%%%%%%%%%%%%%%%%%%%%%%%%%
%%%%%%%%%%%%%%%%%%%%%%%%%%%%%%%%%%%%%%%%%%%%%%%%%%%%%%%%%%%%%%%%%%%%%%%%%%%%%%%%%%%%%%%%%%%%%%%%%%%%%%%%%%
\section{Proof of Lemma \ref{lemma:martingale}}
\label{sec:AppendixB}

\begin{lemma}\label{lemma:loss_grad_lipschitz}
	Under Assumptions $\ref{as4},\ref{as5}$, and $\ref{as7}$, the gradient of the loss function $\ell$ with respect to the GNN parameters $\ccalH$ is $\Anl$-Lipschitz, with $\Anl=(\AnWNN+\AWNN F^{2L} \sqrt{K})$. Explicitly,
	\bal
	&\|\nabla_{\ccalH}\ell(Y,\bbPhi(X;\ccalA,\bbW)) -\nabla_{\ccalH}\ell(Y,\bbPhi(X;\ccalB,\bbW)) \|\nonumber\\
	&\quad\quad\quad\quad\quad\quad\quad\quad\quad\quad\quad\quad\quad\quad\leq \Anl \|\ccalA-\ccalB\|
	\eal
\end{lemma}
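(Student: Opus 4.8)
The plan is to reduce the whole statement to the chain rule followed by a single add-and-subtract step. Writing $\nabla\ell$ for the derivative of $\ell$ with respect to its second (scalar) argument, the chain rule gives $\nabla_{\ccalH}\ell(Y,\bbPhi(X;\ccalA,\bbW)) = \nabla\ell(Y,\bbPhi(X;\ccalA,\bbW))\,\nabla_{\ccalH}\bbPhi(X;\ccalA,\bbW)$, and identically for $\ccalB$. The object to bound is thus the difference of two products $\nabla\ell\cdot\nabla_{\ccalH}\bbPhi$ evaluated at $\ccalA$ and at $\ccalB$, which is exactly the setting for the standard ``product of a bounded factor and a Lipschitz factor is Lipschitz'' argument.

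First I would insert the cross term $\nabla\ell(Y,\bbPhi(X;\ccalA,\bbW))\,\nabla_{\ccalH}\bbPhi(X;\ccalB,\bbW)$ and regroup:
\begin{align}
&\nabla_{\ccalH}\ell(Y,\bbPhi(X;\ccalA,\bbW))-\nabla_{\ccalH}\ell(Y,\bbPhi(X;\ccalB,\bbW))\nonumber\\
&=\nabla\ell(Y,\bbPhi(X;\ccalA,\bbW))\big(\nabla_{\ccalH}\bbPhi(X;\ccalA,\bbW)-\nabla_{\ccalH}\bbPhi(X;\ccalB,\bbW)\big)\nonumber\\
&\quad+\big(\nabla\ell(Y,\bbPhi(X;\ccalA,\bbW))-\nabla\ell(Y,\bbPhi(X;\ccalB,\bbW))\big)\nabla_{\ccalH}\bbPhi(X;\ccalB,\bbW).\nonumber
\end{align}
Applying the triangle inequality and Cauchy--Schwarz then splits the bound into two products, each pairing a bounded factor with a Lipschitz increment.

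Next I would bound the four resulting factors. By Assumption \ref{as4}, $\ell$ is normalized Lipschitz in its second argument, so $\|\nabla\ell(Y,\cdot)\|\leq 1$; and $\nabla\ell$ is normalized Lipschitz, so $\|\nabla\ell(Y,\bbPhi(X;\ccalA,\bbW))-\nabla\ell(Y,\bbPhi(X;\ccalB,\bbW))\|\leq\|\bbPhi(X;\ccalA,\bbW)-\bbPhi(X;\ccalB,\bbW)\|$. By Assumption \ref{as7}, the latter is at most $\AWNN\|\ccalA-\ccalB\|$, and $\|\nabla_{\ccalH}\bbPhi(X;\ccalA,\bbW)-\nabla_{\ccalH}\bbPhi(X;\ccalB,\bbW)\|\leq\AnWNN\|\ccalA-\ccalB\|$. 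Finally, Lemma \ref{lemma:norm_grad_on_wnn_bound} gives $\|\nabla_{\ccalH}\bbPhi(X;\ccalB,\bbW)\|\leq F^{2L}\sqrt{K}$. Substituting these estimates into the two summands yields $\AnWNN\|\ccalA-\ccalB\|+\AWNN F^{2L}\sqrt{K}\,\|\ccalA-\ccalB\|=(\AnWNN+\AWNN F^{2L}\sqrt{K})\|\ccalA-\ccalB\|$, which is precisely $\Anl\|\ccalA-\ccalB\|$.

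I do not expect a serious obstacle, since the argument is the textbook fact that a product of a bounded map and a Lipschitz map is Lipschitz. The only points requiring care are (i) extracting $\|\nabla\ell\|\leq 1$ from the normalized-Lipschitz hypothesis on $\ell$ itself, not merely on its gradient; and (ii) matching the cross-term pairing so that the boundedness of $\nabla_{\ccalH}\bbPhi$ from Lemma \ref{lemma:norm_grad_on_wnn_bound} multiplies the loss-gradient increment while the boundedness $\|\nabla\ell\|\leq 1$ multiplies the network-gradient increment. This pairing is exactly what produces the two additive contributions $\AnWNN$ and $\AWNN F^{2L}\sqrt{K}$ that make up $\Anl$.
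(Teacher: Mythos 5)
Your proof is correct and takes essentially the same approach as the paper's: both apply the chain rule, insert the cross term $\nabla\ell(Y,\bbPhi(X;\ccalA,\bbW))\,\nabla_{\ccalH}\bbPhi(X;\ccalB,\bbW)$, split via the triangle and Cauchy--Schwarz inequalities, and bound the resulting factors by AS\ref{as4}, AS\ref{as7}, and Lemma~\ref{lemma:norm_grad_on_wnn_bound}, with exactly the pairing that yields $\Anl=\AnWNN+\AWNN F^{2L}\sqrt{K}$. The only difference is cosmetic: your displayed decomposition is written more carefully than the paper's (whose second summand contains a typographical slip), but the argument is identical.
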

\begin{proof}
	We start by applying the chain rule to obtain
	\bal
	\|&\nabla_{\ccalH}\ell(Y,\bbPhi(X;\ccalA,\bbW)) -\nabla_{\ccalH}\ell(Y,\bbPhi(X;\ccalB,\bbW)) \|\nonumber\\
	&\quad\quad\quad=\|\nabla \ell(Y,\bbPhi(X;\ccalA,\bbW))\nabla_{\ccalH}\bbPhi(X;\ccalA,\bbW)\nonumber\\ &\quad\quad\quad-\nabla\ell(Y,\bbPhi(X;\ccalB,\bbW))\nabla_{\ccalH}\bbPhi(X;\ccalB,\bbW)\| \nonumber
	\eal
    followed by the triangle and Cauchy-Schwartz inequalities to get
	\bal
	\|&\nabla_{\ccalH}\ell(Y,\bbPhi(X;\ccalA,\bbW)) -\nabla_{\ccalH}\ell(Y,\bbPhi(X;\ccalB,\bbW)) \|\nonumber\\
	&\quad\quad\quad\leq\|\nabla_{\ccalH}\bbPhi(X;\ccalA,\bbW)-\nabla_{\ccalH}\bbPhi(X;\ccalB,\bbW)\|\nonumber\\
	&\quad\quad\quad\times\|\nabla\ell(Y,\bbPhi(X;\ccalA,\bbW))\|+\|\nabla_{\ccalH}\bbPhi(X;\ccalB,\bbW)\| .\nonumber\\
	&\quad\quad\quad\times\|\nabla_\ccalH\ell(Y,\bbPhi(X;\ccalA,\bbW))-\ell(Y,\bbPhi(X;\ccalB,\bbW))\|\nonumber
	\eal
	Finally, we use Assumptions $\ref{as1},\ref{as4},\ref{as5}$, and $\ref{as7}$, and Lemma \ref{lemma:norm_grad_on_wnn_bound} to write
	\bal
	\|&\nabla_{\ccalH}\ell(Y,\bbPhi(X;\ccalA,\bbW)) -\nabla_{\ccalH}\ell(Y,\bbPhi(X;\ccalB,\bbW)) \|\nonumber\\
	&\quad\quad\quad\quad\quad\quad\leq(\AnWNN+\AWNN F^{2L} \sqrt{K})\|\ccalA-\ccalB\|\nonumber
	\eal
	completing the proof. 		
\end{proof}
\begin{proof}[Proof of Lemma \ref{lemma:martingale}]
% 	To begin with, $\bbone(k<k^*) \ell(Y,\bbPhi(X;\ccalH_k,\bbW)) \in \ccalF_k$, where $\ccalF_k$ is the filtration generated by the history of the Algorithm up to $k$. Note that the loss function $\ell$ is positive by Assumption \ref{as4}. It remains to be shown the inequality expression of the super-martingale. For $k> k^*$, the inequality is trivially verified as the indicator function $\bbone(k\leq k^*)=0$ for $k>k^*$.
	To start, we do as in \cite{bertsekas2000gradient}, i.e., we define a continuous function $g(\epsilon)$ that at $\epsilon=1$ takes the value of the loss function on the graphon data at iteration $k+1$, and at $\epsilon=0$, the value at iteration $k$. Explicitly,
	\bal
	g(\epsilon)=\ell(Y,\bbPhi(X; \ccalH_{k}-\epsilon\eta_k \nabla_{\ccalH} \ell(Y_n,\bbPhi(X_n;\ccalH_k,\bbWn)),\bbW)).\nonumber
	\eal
	Function $g(\epsilon)$ is evaluated on the graphon data $Y,X,\bbW$, but the steps are controlled by the induced graphon data $Y_n,X_n,\bbW_n$. Applying the chain rule, the derivative of $g(\epsilon)$ with respect to $\epsilon$ can be written as
	\bal
	&\frac{\partial g(\epsilon)}{\partial \epsilon} =-\eta_k \nabla_{\ccalH} \ell(Y_n,\bbPhi(X_n;\ccalH_k;\bbWn))\nonumber\\
	&\quad\times\nabla_{\ccalH}\ell(Y,\bbPhi(X;\ccalH_{k}-\epsilon\eta_k \nabla_{\ccalH} \ell(Y_n,\bbPhi(\ccalH_k;\bbWn;X_n)),\bbW)).\nonumber
	\eal
    Between iterations $k+1$ and $k$, the difference in the loss function $\ell$ can be written as the difference between $g(1)$ and $g(0)$,
	\bal
	g(1)-g(0)=\ell(Y,\bbPhi(X;\ccalH_{k+1},\bbW))-\ell(Y,\bbPhi(X;\ccalH_{k},\bbW)).\nonumber
	\eal
	Integrating the derivative of $g(\epsilon)$ in $[0,1]$, we get
	\bal
	&\ell(Y,\bbPhi(X;\ccalH_{k+1},\bbW))-\ell(Y,\bbPhi(X;\ccalH_{k},\bbW))\nonumber\\
	&\quad\quad\quad\quad\quad\quad\quad\quad\quad=g(1)-g(0)=\int_0^1 \frac{\partial g(\epsilon)}{\partial \epsilon} d\epsilon\nonumber\\
	&\quad\quad\quad\quad\quad\quad\quad\quad\quad=-\int_0^1\eta_k \nabla_{\ccalH} \ell(Y_n,\bbPhi(X_n;\ccalH_k,\bbWn))\nonumber\\
	&\times\nabla_{\ccalH}\ell(Y,\bbPhi(X;\ccalH_{k}-\epsilon\eta_k \nabla_{\ccalH} \ell(Y_n,\bbPhi(X_n;\ccalH_k,\bbWn)),\bbW))d\epsilon.\nonumber
	\eal
	Now note that the last term of the integral does not depend on $\epsilon$. Thus, we can proceed by adding and subtracting $\nabla_{\ccalH}\ell(Y,\bbPhi(\ccalH_{k},\bbW,X))$ inside the integral to get
	\bal
	\ell&(Y,\bbPhi(X;\ccalH_{k+1},\bbW))-\ell(Y,\bbPhi(X;\ccalH_{k},\bbW))\nonumber\\
	&=-\eta_k \nabla_{\ccalH} \ell(Y_n,\bbPhi(X_n;\ccalH_k,\bbWn))\nonumber\\
	&\times\int_0^1 \nabla\ell(Y,\bbPhi(X;\ccalH_{k}-\epsilon\eta_k \nabla \ell(Y_n,\bbPhi(X_n;\ccalH_k,\bbWn)),\bbW))\nonumber\\
	&+\nabla_{\ccalH}\ell(Y,\bbPhi(X;\ccalH_{k},\bbW))-\nabla_{\ccalH}\ell(Y,\bbPhi(X;\ccalH_{k},\bbW))d\epsilon\nonumber\\
	&=-\eta_k \nabla_{\ccalH} \ell(Y_n,\bbPhi(X_n;\ccalH_k,\bbWn))\nabla_{\ccalH}\ell(Y,\bbPhi(X;\ccalH_{k},\bbW))\nonumber\\
	&-\eta_k \nabla_{\ccalH} \ell(Y_n,\bbPhi(X_n;\ccalH_k,\bbWn))\nonumber\\
	&\times\int_0^1 \nabla_{\ccalH}\ell(Y,\bbPhi(\ccalH_{k}-\epsilon\eta_k \nabla \ell(Y_n,\bbPhi(X_n;\ccalH_k,\bbWn)),\bbW,X))\nonumber\\
	&-\nabla\ell(Y,\bbPhi(X;\ccalH_{k},\bbW))d\epsilon. \label{eqn:Lemma1_before_bouding_integral}
	\eal
	
	Next, we can apply the Cauchy-Schwartz inequality to the last term of \eqref{eqn:Lemma1_before_bouding_integral} and take the norm of the integral (which is smaller that the integral of the norm), to obtain
	\bal
	\ell&(Y,\bbPhi(X;\ccalH_{k+1},\bbW))-\ell(Y,\bbPhi(X;\ccalH_{k},\bbW))\nonumber\\
	&\leq-\eta_k \nabla_{\ccalH} \ell(Y_n,\bbPhi(\ccalH_k;\bbWn;X_n))\nabla_{\ccalH}\ell(Y,\bbPhi(\ccalH_{k},\bbW,X))\nonumber\\
	&+\eta_k\|\nabla_{\ccalH} \ell(Y_n,\bbPhi(\ccalH_k;\bbWn;X_n))\|\nonumber\\
	&\times \int_0^1\|\nabla_{\ccalH}\ell(Y,\bbPhi(\ccalH_{k},\bbW,X))\nonumber\\
	&-\nabla\ell(Y,\bbPhi(\ccalH_{k}-\epsilon\eta_k \nabla \ell(Y_n,\bbPhi(\ccalH_k;\bbWn;X_n)),\bbW,X))\|d\epsilon.\nonumber\eal
	
	By Lemma \ref{lemma:loss_grad_lipschitz}, we use $\Anl$ to write
	\bal
	\ell(Y&,\bbPhi(X;\ccalH_{k+1},\bbW))-\ell(Y,\bbPhi(X;\ccalH_{k},\bbW))\nonumber\\
	&\leq-\eta_k \nabla_{\ccalH} \ell(Y_n,\bbPhi(X_n;\ccalH_k,\bbWn))\nabla_{\ccalH}\ell(Y,\bbPhi(X;\ccalH_{k},\bbW))\nonumber\\
	&+\Anl\eta_k \|\nabla_{\ccalH} \ell(Y_n,\bbPhi(X_n;\ccalH_k,\bbWn))|\nonumber \\
	&\times \int_0^1 \bigg\| \eta_k \nabla_{\ccalH} \ell(Y_n,\bbPhi(X_n;\ccalH_k,\bbWn))\bigg\|\epsilon d\epsilon\nonumber\\
	&\leq-\eta_k \nabla_{\ccalH} \ell(Y_n,\bbPhi(X_n;\ccalH_k,\bbWn))\nabla_{\ccalH}\ell(Y,\bbPhi(X;\ccalH_{k},\bbW))\nonumber\\
	&+\frac{\eta_k^2 \Anl}{2} \|\nabla_{\ccalH} \ell(Y_n,\bbPhi(X_n;\ccalH_k,\bbWn))\|^2. \nonumber
	\eal
	
	Factoring out $\eta_k$, we get
	\bal
	\ell(Y&,\bbPhi(X;\ccalH_{k+1},\bbW))-\ell(Y,\bbPhi(X;\ccalH_{k},\bbW))\nonumber\\
	&\leq -\frac{\eta_k}{2}  \bigg(-\|\nabla_{\ccalH} \ell(Y_n,\bbPhi(X_n;\ccalH_k,\bbWn))\|^2  \nonumber\\
	&+2\nabla_{\ccalH}\ell(Y_n,\bbPhi(X_n;\ccalH_k,\bbWn))^T\nabla_{\ccalH}\ell(Y,\bbPhi(X;\ccalH_{k},\bbW))\bigg) \nonumber\\
	&+\frac{\eta_k^2 \Anl-\eta_k}{2} \|\nabla_{\ccalH} \ell(Y_n,\bbPhi(X_n;\ccalH_k,\bbWn))\|^2. \nonumber
	\eal
	
	Given that the norm is induced by the vector inner product in $\reals^{LF}$, for any two vectors $A,B$, $||A-B||^2-||B||^2=||A||^2-2\langle A,B\rangle$. Hence,
	\bal
	\ell&(Y,\bbPhi(X;\ccalH_{k+1},\bbW))-\ell(Y,\bbPhi(X;\ccalH_{k},\bbW))\nonumber\\
	&\leq \frac{-\eta_k}{2}  \bigg(\|\nabla_{\ccalH} \ell(Y,\bbPhi(X;\ccalH_k,\bbW))\|^2  \nonumber\\
	&-||\nabla_{\ccalH}\ell(Y_n,\bbPhi(X_n;\ccalH_k,\bbWn))-\nabla_{\ccalH}\ell(Y,\bbPhi(X;\ccalH_{k},\bbW))||^2\bigg) \nonumber\\
	&+\frac{\eta_k^2 \Anl-\eta_k}{2} \|\nabla_{\ccalH} \ell(Y_n,\bbPhi(X_n;\ccalH_k,\bbWn))\|^2.\nonumber
	\eal
	Considering the first term on the right hand side, we know that the norm of the expected difference between the gradients is bounded by Theorem \ref{thm:grad_Approximation}. Given that norms are positive, the inequality still holds when the elements are squared (if $a>b,a\in \reals_+,b\in\reals_+$, then $a^2>b^2$). Considering the second term on the right hand side, we impose the condition that $\eta_k<\frac{1}{\Anl}$, which makes this term negative. Taking the expected value of all the nodes completes the proof.
% 	\bal
% 	&\mbE[\ell(Y,\bbPhi(X;\ccalH_{k+1},\bbW))]-\ell(Y,\bbPhi(X;\ccalH_{k},\bbW))\leq 0.
% 	\eal
\end{proof}

\bibliographystyle{IEEEtran}
\bibliography{bib}

\end{document}